\newtheorem{prop}{Proposition}
\newcolumntype{M}[1]{>{\centering\arraybackslash}m{#1}}
\ifcvprfinal\pagestyle{empty}\fi
\begin{document}

\title{Hardware-Efficient Guided Image Filtering For Multi-Label Problem  \vspace{-0.5cm} }
\author{ Longquan Dai$^1$ \quad  Mengke Yuan$^{2,3}$ \quad Zechao Li$^1$ \quad Xiaopeng Zhang$^2$ \quad Jinhui Tang$^{1, *}$ \\
1. School of Computer Science and Engineering, Nanjing University of Science and Technology \\  2.   National Laboratory of Pattern Recognition, 
 Institute of Automation, Chinese Academy of Sciences\\
 3. University of Chinese Academy of Sciences
\\
{\tt\small \{dailongquan, zechao.li, jinhuitang\}@njust.edu.cn, \{mengke.yuan, xpzhang\}@nlpr.ia.ac.cn  \vspace{-0.5cm}}
}

\maketitle
\renewcommand{\thefootnote}{}
\footnotetext{* Corresponding author.}
\renewcommand{\thefootnote}{\arabic{footnote}}


\begin{abstract}
   The Guided Filter (GF) is well-known for its linear complexity. However, when filtering an image with an n-channel guidance, GF needs to invert an $n \times n$ matrix for each pixel. To the best of our knowledge existing matrix inverse algorithms are inefficient on current hardwares. This shortcoming limits applications of multichannel guidance in computation intensive system such as multi-label system.  \textbf{We need a new GF-like filter that can perform fast multichannel image guided filtering.} Since the optimal linear complexity of GF cannot be minimized further, the only way thus is to bring all potentialities of current parallel computing hardwares into full play. In this paper we propose a hardware-efficient Guided Filter (HGF), which solves the efficiency problem of multichannel guided image filtering and yields competent results when applying it to multi-label problems with synthesized polynomial multichannel guidance. Specifically, in order to boost the filtering  performance,  HGF takes a new matrix inverse algorithm which only involves two hardware-efficient operations: element-wise arithmetic calculations and box filtering. In order to break the linear model restriction, HGF synthesizes a polynomial multichannel guidance to introduce nonlinearity. Benefiting from our polynomial guidance and hardware-efficient matrix inverse algorithm, HGF not only is more sensitive to the underlying structure of guidance but also achieves the fastest computing speed. Due to these merits, HGF obtains state-of-the-art results in terms of accuracy and efficiency in the computation intensive multi-label systems. 
\end{abstract}

\section{Introduction}

Since 2010, GF has been applied to many computer vision and graphics problems such as image retargeting~\cite{Ding_CVPR_2011}, color transfer~\cite{Chia_TOG_2011} and video dehazing~\cite{Zhang_TVC_2011}. 
Among them, multi-label system maybe one of the most suitable applications for GF to make full use of  its efficiency and effectiveness because the heavy computation in multi-label system is in urgent need of a fast filtering tool. A typical multi-label system~\cite{Hosni_PAMI_2013} records the costs $c$ for choosing a label $l$ at coordinates $x$ and $y$ in a cost volume ($V(x, y, l) = c$) according to the input data. Then WTA (Winner-Takes-All) label selection strategy is exploited to determine the final label for each pixel $(x, y)$ after the aggregation (\ie guided image smoothing) step operated on each slice of the cost volume. Besides applying GF to the aggregation step of the multi-label system, we can also incorporate it into MRF models~\cite{Vineet_IJCV_2014, Xiao_ECCV_2006} since edge-aware filters adopted in these MRF models can be taken place by GF directly.  

Due to the linear complexity and edge-preserving ability, GF is consider as the best choice~\cite{Hosni_PAMI_2013} among all candidate filters for the multi-label system. However, a blemish of GF is that the color image guided filtering algorithm is not very efficient. 
More specifically, matrix inverse is a time-consuming operation, but GF needs to invert a $3 \times 3$ matrix for each pixel to calculate both coefficients $\vec{w}_{\mathfrak{p}}$ of Eq \eqref{eq:GF_linear_model} 
\begin{scheme}[h]
	\vspace{-0.3cm}
	\begin{equation}
	\bm{Z}({\mathfrak{q}}) = \sum_{i=1}^{3} \vec{w}_{\mathfrak{p}}(i) \bm{I}_{i}(\mathfrak{q}) + \vec{w}_{\mathfrak{p}}(0)
	\label{eq:GF_linear_model}
	\end{equation}
	\vspace{-0.5cm}
\end{scheme}
and filtering result $\mathbf{Z}(\mathfrak{q})$ according to the $i$th channel $\bm{I}_{i}(\mathfrak{q})$ of a color input guidance $\bm{I}$. Fig~\ref{fig:preface:b} plots the run time of inverting $10^6$ matrices with increasing matrix size, where the matrix inverse algorithm is the built-in LU algorithms of OpenCV. We can observe that the run time increases with the size of matrix dramatically. Hence it is inefficient to apply GF to multi-label system with a multi-channel guidance, especially for a large channel number.


To decrease the execution time, the most straightforward method for GF is to launch a set of threads to invert matrices simultaneously. However this strategy is not very efficient on current hardwares. This is because 1, both CPU and GPU rely on SIMD (Single Instruction Multiple Data) architecture to boost performance; 2, branch instructions are inevitable for traditional matrix inversion methods such as LU algorithm~\cite{Hougardy_BOOK_2016}; 3, SIMD architecture cannot achieve the fastest speed to run branch instructions as these instructions require decomposing each vector into elements and processing them sequentially on the architecture.  Fig~\ref{fig:preface:b} also illustrates the run time of inverting $10^6$ matrices simultaneously. The reported data is unsatisfactory. To avoid branch instructions, we can invert matrices according to the analytic solution of matrix inverse~\cite{Horn_BOOK_1985}. The fastest OpenCV implementation of GF takes this strategy to invert $3 \times 3$ matrices and successfully reduces the run time of inverting $10^6$ matrices to less than $100ms$.  But the implementation complexity of the analytic solution increases with matrix size heavily. When the size of matrix becomes large, the method is no longer implementable manually. This explains why we miss the run time of this method in Fig~\ref{fig:preface:b}.

\begin{figure}[t]
	\centering		\vspace{-0.2cm}
	\begin{subfigure}[b]{0.45\linewidth}
		\includegraphics[width=\textwidth]{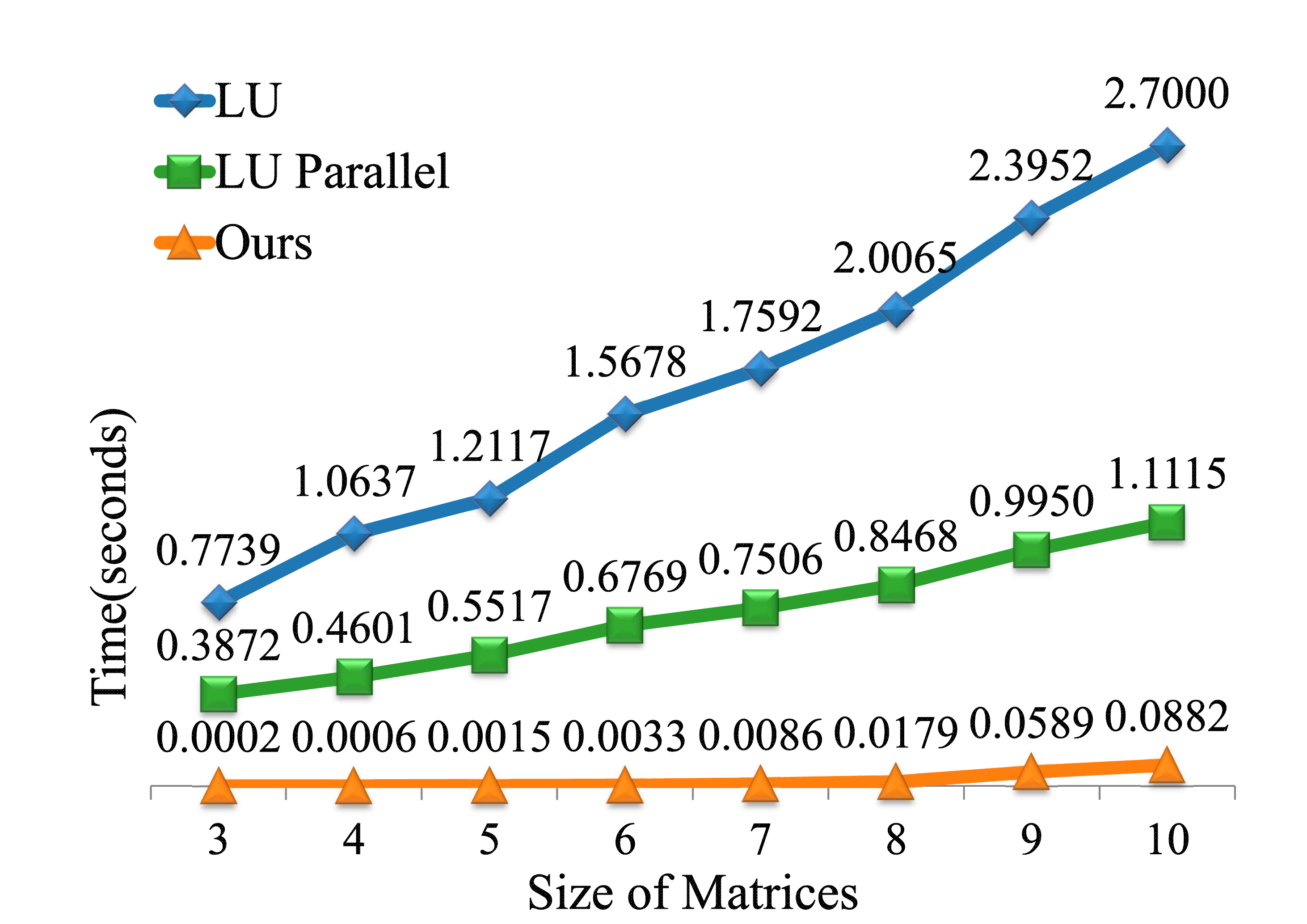}
		\caption{Matrix Inverse}
		\label{fig:preface:b}
	\end{subfigure}	
	\begin{subfigure}[b]{0.45\linewidth}
		\includegraphics[width=\textwidth]{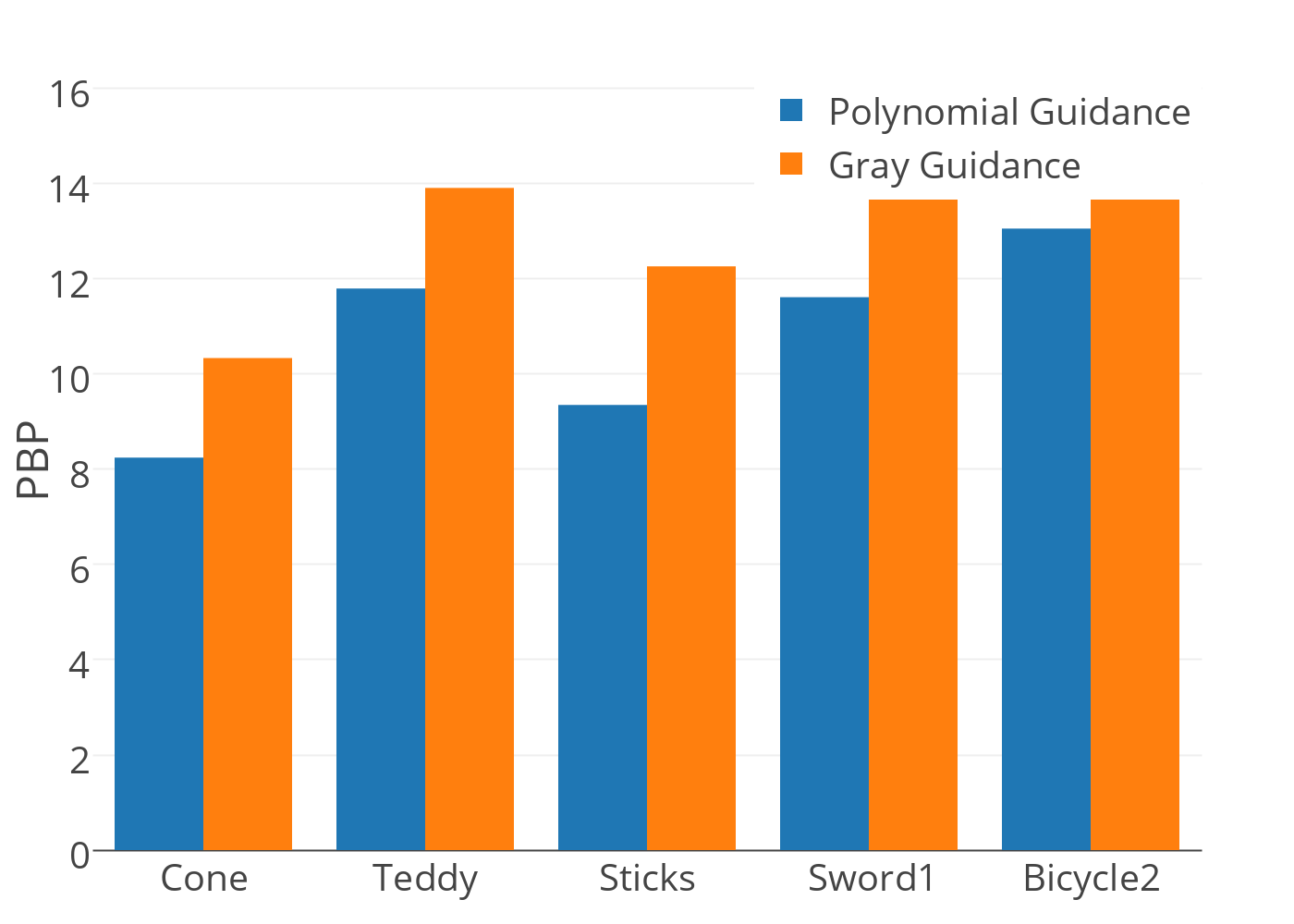}
		\caption{Stereo Matching}
		\label{fig:preface:a}
	\end{subfigure}
	\caption{\textbf{Matrix Inverse $\bm{\&}$ Stereo Matching:} (a) demonstrates the run time of inverting $10^6$ matrices with increasing size (b) illustrates the stereo matching results of GF with different guidances. }
	\label{fig:preface}   \vspace{-0.5cm}
\end{figure}

\textbf{We propose a hardware-efficient matrix inverse algorithm} to exploit the parallel computing power of hardwares to its full potential. The ability of our algorithm stems from two hardware-efficient operations: \emph{element-wise arithmetic calculations} and \emph{box filtering}. To compare with the built-in LU algorithm adopted by OpenCV, we collect run times of our algorithm implemented by Intel OpenCL and plot them in Fig~\ref{fig:preface:b}. The result proves that our method gets a substantial performance boost on Intel platform.


\textbf{We introduce nonlinearity to GF by synthesizing a polynomial multi-channel guidance from the input guidance} to conquer the drawback of the linear model of GF. The linear model usually underfits the 
input data due to its simplicity and thus is apt to produce over-smoothed results. Fig~\ref{fig:preface:a} shows the stereo matching results under the guidance of a gray image and our  synthesized multi-channel image. Obviously, the synthesized guidance produces better results. Note that the extra
run time in accompany with the nonlinear model can be reduced to an acceptable level by our hardware-efficient matrix inverse algorithm.



\textbf{Taking proposed polynomial guidance and new matrix inverse algorithm, we construct a hardware-efficient guided filter (HGF) for multi-label problem}. For presentation clarity, we organize the rest of this paper in following way. Section~\ref{sec:work} is devoted to introduce related work. In the next section, we present how to invert matrix efficiently and its application in computing regression coefficients. Section~\ref{sec:HGF},~\ref{sec:discussion} discuss HGF in details, including the hardware-efficient implementation and the method to synthesis polynomial guidances. Finally, we conduct comprehensive experiments to verify the superiority of HGF in section~\ref{sec:exp}.

\section{Related Work}
\label{sec:work}

\textbf{Acceleration:} People have already proposed many image guided filters such as the Bilateral Filter (BF)~\cite{Tomasi_ICCV_1998} and the Median Filter (MF)~\cite{Zhang_CVPR_2014} for edge-preserving smoothing. However, the benefit is not free. For example, the naive implementation of BF usually requires several minutes to filter a typical megapixel image. Several methods have been developed to accelerate the computation of bilateral filtering  ~\cite{Durand_TOG_2002,Paris_IJCV_2009,Porikli_CVPR_2008,Yang_CVPR_2009,Gunturk_TIP_2011,Dai_TIP_2016} and as a result people obtained several orders of magnitude speedup. Similar to BF, MF can also  effectively filter images while not strongly blurring edges. Although there is little work for speeding up weighted MF~\cite{Ma_ICCV_2013, Zhang_CVPR_2014}, unweighted MF has several solutions~\cite{Cline_ICIP_2007,Kass_TOG_2010,Perreault_TIP_2007,Weiss_TOG_2006}. Generally, the main idea of above accelerating methods focuses on reducing the computational complexity per pixel to the optimal $O(1)$. However, the minimal linear complexity of GF leaves no room to reduce the computational complexity further. To accelerate GF, He \etal~\cite{He_CoRR_2015} remind that GF can be sped up by subsampling to subset the amount of data processed. But this strategy inevitably introduces approximation error. A different way is to implement the algorithm of GF on VLSI directly~\cite{Kao_TCSVT_2014}. But Kao \etal~\cite{Kao_TCSVT_2014} only give  a gray guidance based GF implementation as existing matrix inverse algorithms are too complicated to implement on hardware. Different from other algorithms, our hardware-efficient matrix inverse algorithm not only can fully exploit current parallel  processors
 but also can be easily implemented.

\textbf{Improvement:} Due to nice visual quality, fast speed, and ease of implementation, GF since its birth has received much effort to make it much better. In 2012,  Lu \etal~\cite{Lu_CVPR_2012} pointed out that the box window adopted by GF is not geometric-adaptive and design a adaptive support region to take place of it. However, Tan \etal~\cite{Tan_CVPR_2014} argued that the new designed adaptive support region depends on the scanning order and put forward a further improvement for the support region. In addition, they introduced a quadratic spatial regularization term to GF. But the filtering speed is very slow because they employ traditional algorithms to invert $5 \times 5$ matrices. Later, Dai \etal~\cite{Dai_ICCV_2015} successfully incorporated spatial similarity into GF without significant performance degeneration. Unlike previous methods, Li \etal~\cite{Li_TIP_2015} modified GF to adapt it to suppress halo. 
Through careful investigation, we can find that all these improvements do not make any modification for the gray/color guidance or incorporate more complicated nonlinear model to GF to achieve better results. This is because these modifications will increase the matrix size and therefore make the run time unacceptable. Note that even though all of them only take into account the linear model, their speeds are still slower than GF~\cite{Dai_ICCV_2015}. Benefiting from our hardware-efficient matrix inverse algorithm, we realizes these modifications in HGF without sacrificing efficiency.

\section{Technology Brief}
\label{sec:brief}

In this section, we outline our matrix inverse algorithm and its application in computing $\vec{w}_{\mathfrak{p}}$ in Eq~\eqref{eq:HGF_w1}.  Before the formal discussion, we kindly remind readers to refer to Table~\ref{tab:symbol_conventions} for symbol conventions. 

The key step of HGF is to calculate the vector $\vec{w}_{\mathfrak{p}}$ according to Eq~\eqref{eq:HGF_w1} which contains the matrix inverse operation $(\lambda \bm{E} + \bm{X}^T_{\mathfrak{p}} \bm{X}_{\mathfrak{p}} )^{-1}$.
\begin{figure}[h]
	\vspace{-0.1cm}
	\begin{equation}
	\vec{w}_{\mathfrak{p}} = (\lambda \bm{E} + \bm{X}^T_{\mathfrak{p}} \bm{X}_{\mathfrak{p}} )^{-1}\bm{X}^T_{\mathfrak{p}} \vec{c}_{n+1, \mathfrak{p}}
	\label{eq:HGF_w1}
	\end{equation}
	\vspace{-0.5cm}
\end{figure}
Here $\bm{E}$ denotes an identity matrix and $\bm{X}_{\mathfrak{p}} = [\vec{c}_{0, \mathfrak{p}}, \cdots, \vec{c}_{n, \mathfrak{p}}] $. As for the vector $\vec{c}_{i, \mathfrak{p}}$,  we put $\vec{c}_{i, \mathfrak{p}} = [\bm{G}_{i}(\mathfrak{q}_1),\cdots, \bm{G}_{i }(\mathfrak{q}_{|\Omega_{\mathfrak{p}}|})]^T (0  \leq i \leq  n+1) $ to record values $\bm{G}_i(\mathfrak{q}_k)$, where  $ \mathfrak{q}_k \in \Omega_{\mathfrak{p}}$,  $\Omega_{\mathfrak{p}}$ presents a neighborhood centered at the pixel $\mathfrak{p}$, $|\Omega_{\mathfrak{p}}|$ indicates the total number of pixels in $\Omega_{\mathfrak{p}}$ and $\bm{G}_i$ $(0  \leq i \leq  n+1)$ is an image.

To invert $\lambda \bm{E} + \bm{X}^T_{\mathfrak{p}} \bm{X}_{\mathfrak{p}}$ efficiently, we will substitute  $\lambda \bm{E} + \bm{X}^T_{\mathfrak{p}} \bm{X}_{\mathfrak{p}}$ with $\lambda \bm{E} + \sum_{i=0}^{n}  \vec{c}_{i, \mathfrak{p}} \vec{c}_{i,\mathfrak{p}}^T $ by reformulating Eq~\eqref{eq:HGF_w1} as Eq~\eqref{eq:HGF_w2}.
\begin{figure}[h]
	\vspace{-0.3cm}
\begin{align}
\vec{w}_{\mathfrak{p}} & =  \bm{X}^T_{\mathfrak{p}} (\lambda \bm{E} + \bm{X}_{\mathfrak{p}} \bm{X}^T_{\mathfrak{p}} )^{-1} \vec{c}_{n+1, \mathfrak{p}} \nonumber \\
& =  [\vec{c}_{0, \mathfrak{p}}^T, \cdots, \vec{c}_{n, \mathfrak{p}}^T]^T (\lambda \bm{E} + \sum_{i=0}^{n}  \vec{c}_{i, \mathfrak{p}} \vec{c}_{i,\mathfrak{p}}^T )^{-1} \vec{c}_{n+1, \mathfrak{p}}  \label{eq:HGF_w2}
\end{align}
\vspace{-0.5cm}
\end{figure}
This is because $(\lambda \bm{E} + \sum_{i=0}^{n}  \vec{c}_{i, \mathfrak{p}} \vec{c}_{i,\mathfrak{p}}^T )^{-1}$ can be interpreted as a linear combination of outer products $\vec{c}_{i, \mathfrak{p}} \vec{c}_{i,\mathfrak{p}}^T$~\eqref{eq:HGF_inverse} according to following Proposition~\ref{prop:inverse}.
\begin{prop} If $\lambda \bm{E} + \sum_{i=0}^{n}  \vec{c}_{i, \mathfrak{p}} \vec{c}_{i,\mathfrak{p}}^T $ is invertible, then
	\begin{equation}
	(\lambda \bm{E} + \sum_{i=0}^{n}  \vec{c}_{i, \mathfrak{p}} \vec{c}_{i,\mathfrak{p}}^T)^{-1} = \lambda^{-1} \bm{E} + \sum_{i,j=0}^{n} \alpha_{ij, \mathfrak{p}} \vec{c}_{i, \mathfrak{p}} \vec{c}_{j,\mathfrak{p}}^T
	\label{eq:HGF_inverse}
	\end{equation}	
	where $\alpha_{ij, \mathfrak{p}} = \alpha^{n}_{ij, \mathfrak{p}}$ which can be iteratively computed by
	
	\begin{align}\hspace{-2mm}
	\alpha_{ij,\mathfrak{p}}^{\kappa} = & \begin{cases}
	\gamma_{\mathfrak{p}}^\kappa F_{ij,\mathfrak{p}}^{\kappa} + \alpha_{ij,\mathfrak{p}}^{\kappa-1} & i < \kappa, j < \kappa \\
	\lambda^{-1} \gamma_{\mathfrak{p}}^\kappa (\sum_{n=0}^{\kappa-1} \alpha^{\kappa-1}_{in, \mathfrak{p}} G_{n j, \mathfrak{p}})  &  i < \kappa, j = \kappa \\
	\lambda^{-1} \gamma_{\mathfrak{p}}^\kappa (\sum_{m=0}^{\kappa-1} \alpha^{\kappa-1}_{mj, \mathfrak{p}} G_{i m, \mathfrak{p}} ) &  i = \kappa, j < \kappa  \\
	\lambda^{-2} \gamma_{\mathfrak{p}}^\kappa & i = j = \kappa 
	\end{cases}
	\end{align} 
	from $\kappa = 1$ to $\kappa = n$ with $\alpha_{00,\mathfrak{p}}^{0} = - (\lambda +  G_{00,\mathfrak{p}})^{-1}$, $G_{ij, \mathfrak{p}} = \vec{c}_{i,\mathfrak{p}}^T \vec{c}_{j, \mathfrak{p}}$, $F_{ij,\mathfrak{p}}^{\kappa} = \sum_{m,n=0}^{\kappa-1} \alpha^{\kappa-1}_{im,\mathfrak{p}} \alpha^{\kappa-1}_{nj,\mathfrak{p}} G_{m\kappa,\mathfrak{p}}  G_{\kappa n,\mathfrak{p}}$ and $\gamma_{\mathfrak{p}}^\kappa = - ( 1 + \lambda^{-1} G_{\kappa \kappa,\mathfrak{p}} + \sum_{m,n=0}^{\kappa-1} \alpha^{\kappa-1}_{mn, \mathfrak{p}}  G_{\kappa m, \mathfrak{p}} G_{n\kappa, \mathfrak{p}} )^{-1}  $. 
	\label{prop:inverse}
\end{prop}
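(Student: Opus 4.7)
The plan is to prove Proposition~\ref{prop:inverse} by induction on $\kappa$, viewing the matrix $\bm{A}_\kappa := \lambda\bm{E} + \sum_{i=0}^\kappa \vec{c}_{i,\mathfrak{p}}\vec{c}_{i,\mathfrak{p}}^T$ as a sequence of rank-one updates of $\lambda\bm{E}$ and invoking the Sherman--Morrison formula at each step. The inductive hypothesis is that
\begin{equation*}
\bm{A}_{\kappa-1}^{-1} = \lambda^{-1}\bm{E} + \sum_{i,j=0}^{\kappa-1} \alpha_{ij,\mathfrak{p}}^{\kappa-1}\,\vec{c}_{i,\mathfrak{p}}\vec{c}_{j,\mathfrak{p}}^T,
\end{equation*}
with coefficients satisfying the stated recurrence, so that the desired identity~\eqref{eq:HGF_inverse} is exactly the $\kappa = n$ case. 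The base case $\kappa = 0$ is a single application of Sherman--Morrison to $\lambda\bm{E}$ perturbed by $\vec{c}_{0,\mathfrak{p}}\vec{c}_{0,\mathfrak{p}}^T$, which fixes $\alpha_{00,\mathfrak{p}}^0$.

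For the inductive step I would apply Sherman--Morrison to the update $\bm{A}_\kappa = \bm{A}_{\kappa-1} + \vec{c}_{\kappa,\mathfrak{p}}\vec{c}_{\kappa,\mathfrak{p}}^T$, yielding
\begin{equation*}
\bm{A}_\kappa^{-1} = \bm{A}_{\kappa-1}^{-1} - \frac{\bm{A}_{\kappa-1}^{-1}\vec{c}_{\kappa,\mathfrak{p}}\vec{c}_{\kappa,\mathfrak{p}}^T\bm{A}_{\kappa-1}^{-1}}{1 + \vec{c}_{\kappa,\mathfrak{p}}^T\bm{A}_{\kappa-1}^{-1}\vec{c}_{\kappa,\mathfrak{p}}}.
\end{equation*}
Substituting the inductive form for $\bm{A}_{\kappa-1}^{-1}$ and using $\vec{c}_{i,\mathfrak{p}}^T\vec{c}_{j,\mathfrak{p}} = G_{ij,\mathfrak{p}}$, the scalar denominator collapses to $1 + \lambda^{-1}G_{\kappa\kappa,\mathfrak{p}} + \sum_{m,n=0}^{\kappa-1}\alpha_{mn,\mathfrak{p}}^{\kappa-1}G_{\kappa m,\mathfrak{p}}G_{n\kappa,\mathfrak{p}}$, whose negative reciprocal is precisely $\gamma_{\mathfrak{p}}^\kappa$. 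The vectors $\bm{A}_{\kappa-1}^{-1}\vec{c}_{\kappa,\mathfrak{p}}$ and $\vec{c}_{\kappa,\mathfrak{p}}^T\bm{A}_{\kappa-1}^{-1}$ then expand into linear combinations of the $\vec{c}_{i,\mathfrak{p}}$, and their outer product splits into four bands according to whether the row/column index lies in $\{0,\dots,\kappa-1\}$ or equals $\kappa$.

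Reading off the coefficient of $\vec{c}_{i,\mathfrak{p}}\vec{c}_{j,\mathfrak{p}}^T$ in each band reproduces the four cases of the recurrence: the interior-by-interior term contributes a double sum that, after relabeling indices, matches $\gamma_{\mathfrak{p}}^\kappa F_{ij,\mathfrak{p}}^\kappa$ and is added to the inherited $\alpha_{ij,\mathfrak{p}}^{\kappa-1}$ when $i,j<\kappa$; each mixed case picks up one factor of $\lambda^{-1}$ from the $\lambda^{-1}\vec{c}_{\kappa,\mathfrak{p}}$ term on exactly one side, producing the single sums $\sum_{n}\alpha^{\kappa-1}_{in,\mathfrak{p}}G_{nj,\mathfrak{p}}$ and $\sum_{m}\alpha^{\kappa-1}_{mj,\mathfrak{p}}G_{im,\mathfrak{p}}$ appearing in the statement; and the corner case $i=j=\kappa$ picks up $\lambda^{-2}$ from both sides. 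The main obstacle is bookkeeping rather than insight: one has to track carefully which of the two summations in $\bm{A}_{\kappa-1}^{-1}$, and which cross factor $G_{m\kappa,\mathfrak{p}}$ or $G_{\kappa n,\mathfrak{p}}$, contributes to each of the four regions, and then recognize the resulting double sum as $F_{ij,\mathfrak{p}}^\kappa$. Once this matching is verified the induction closes and the proposition follows.
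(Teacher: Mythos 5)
Your approach---induction on $\kappa$ with one Sherman--Morrison rank-one update per step---is exactly the route the paper takes (its proof consists of citing Sherman--Morrison and deferring the bookkeeping to supplementary material), and your expansion of the outer product $(\bm{A}_{\kappa-1}^{-1}\vec{c}_{\kappa,\mathfrak{p}})(\vec{c}_{\kappa,\mathfrak{p}}^{T}\bm{A}_{\kappa-1}^{-1})$ into four index bands does reproduce the stated recurrence: the denominator collapses to $-1/\gamma_{\mathfrak{p}}^{\kappa}$, the two mixed bands give the single sums with one factor of $\lambda^{-1}$, the corner gives $\lambda^{-2}\gamma_{\mathfrak{p}}^{\kappa}$, and the interior double sum is $\gamma_{\mathfrak{p}}^{\kappa}F_{ij,\mathfrak{p}}^{\kappa}$ added to the inherited coefficient. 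That part checks out.

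The one place you wave your hands is the only place something actually goes wrong: the base case. Carrying out the single Sherman--Morrison application to $\lambda\bm{E}+\vec{c}_{0,\mathfrak{p}}\vec{c}_{0,\mathfrak{p}}^{T}$ gives $\alpha_{00,\mathfrak{p}}^{0}=-\lambda^{-2}\bigl(1+\lambda^{-1}G_{00,\mathfrak{p}}\bigr)^{-1}=-\lambda^{-1}(\lambda+G_{00,\mathfrak{p}})^{-1}$, which differs by a factor of $\lambda^{-1}$ from the initialization $-(\lambda+G_{00,\mathfrak{p}})^{-1}$ stated in Proposition~\ref{prop:inverse}. A one-dimensional sanity check confirms the discrepancy: for scalars, $\lambda^{-1}-c^{2}(\lambda+c^{2})^{-1}\neq(\lambda+c^{2})^{-1}$ unless $\lambda=1$, whereas $\lambda^{-1}-\lambda^{-1}c^{2}(\lambda+c^{2})^{-1}=(\lambda+c^{2})^{-1}$. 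The corrected value is also what the $i=j=\kappa$ branch of the recurrence would produce at $\kappa=0$ with the sums in $\gamma_{\mathfrak{p}}^{0}$ empty, namely $\lambda^{-2}\gamma_{\mathfrak{p}}^{0}$, so it is clearly the intended initialization and the statement's constant is a typo. Your induction closes only once you actually compute this base case and adopt the corrected constant; as written, ``which fixes $\alpha_{00,\mathfrak{p}}^{0}$'' silently passes over the one genuine discrepancy between the Sherman--Morrison derivation and the proposition as printed.
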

\begin{proof}
The recursive formula is derived from the Sherman-Morrison formula~\cite{Sherman_AMS_1950}. Constrained by maximal 8 pages constraint, we leave derivation details to supplementary materials. 
\end{proof}

\begin{table}[t]
	\begin{tabular}{|c|p{5.3cm}|}
		\hline 
		\multirow{2}{*}{$\mathcal{F}$}    & Calligraphic means that $\mathcal{F}$ is a function. \tabularnewline
		\hline 
		\multirow{2}{*}{$i$}  & Italics emphasizes that the variable $i$ is a scalar.\tabularnewline
		\hline 
		$\mathfrak{p}$ & Fraktur denotes $\mathfrak{p}$ is an image pixel.\tabularnewline
		\hline 
		$\vec{v}$ & Arrow $\vec{\quad}$  imlies $v$ is a vector.\tabularnewline
		\hline 
		\multirow{2}{*}{$\bm{X}$} & Bold (disregarding case) indicates $\bm{X}$ is a matrix.\tabularnewline
		\hline 
		\multirow{2}{*}{$i_\mathfrak{p} \backslash \vec{v}_{\mathfrak{p}}\backslash\bm{X}_{\mathfrak{p}}$} & The subscript $_{\mathfrak{p}}$ suggests $i_\mathfrak{p}\backslash \vec{v}_{\mathfrak{p}}\backslash\bm{X}_{\mathfrak{p}}$
		is the scalar$\backslash$vector$\backslash$matrix located at $\mathfrak{p}$. \tabularnewline
		\hline 
		\multirow{2}{*}{$\vec{v}_{i,\mathfrak{p}}\backslash$ $\bm{X}_{i,\mathfrak{p}}$} & The subscript $_{i,\mathfrak{p}}$ suggests $\vec{v}_{i,\mathfrak{p}}\backslash$
		$\bm{X}_{i,\mathfrak{p}}$ is the $i$th vector$\backslash$matrix of $\mathfrak{p}$. \tabularnewline
		\hline 
		\multirow{2}{*}{$\vec{v}(j)\backslash\vec{v}_{\mathfrak{p}}(j)\backslash\vec{v}_{i,\mathfrak{p}}(j)$}   & $\vec{v}(j)\backslash\vec{v}_{\mathfrak{p}}(j)\backslash\vec{v}_{i,\mathfrak{p}}(j)$ is the $j$th element of $\vec{v}$ $\backslash$ $\vec{v}_{\mathfrak{p}}$$\backslash$$\vec{v}_{i,\mathfrak{p}}$
		.\tabularnewline
		\hline 
		\multirow{2}{*}{$\bm{X}(\mathfrak{p})\backslash\bm{X}_{i}(\mathfrak{p})$}  & $\bm{X}(\mathfrak{p})\backslash\bm{X}_{i}(\mathfrak{p})$ is the element of $\bm{X}\backslash\bm{X}_{i}$
		located at $\mathfrak{p}$. \tabularnewline
		\hline 
	\end{tabular}
\caption{Symbol Conventions}
\label{tab:symbol_conventions}
\end{table}

Further, putting Eq~\eqref{eq:HGF_inverse} into Eq~\eqref{eq:HGF_w2},
we are able to transform the $k$th element $\vec{w}_{\mathfrak{p}}(k)$ of $\vec{w}_{\mathfrak{p}}$ to a linear combination of $G_{ij, \mathfrak{p}}$ as illustrated in Eq~\eqref{eq:HGF_w3}.
According to Proposition~\ref{prop:box}, 
\begin{figure}[h]
	\vspace{-0.5cm}
	\begin{align}
	\vec{w}_{\mathfrak{p}}(k) = & \vec{c}_{k, \mathfrak{p}}^T (\lambda \bm{E} + \sum_{i=0}^{n}  \vec{c}_{i, \mathfrak{p}} \vec{c}_{i,\mathfrak{p}}^T )^{-1} \vec{c}_{n+1, \mathfrak{p}} \nonumber \\
	= & \vec{c}_{k, \mathfrak{p}}^T  ( \lambda^{-1} \bm{E} + \sum_{i,j=0}^{n} \alpha_{ij, \mathfrak{p}} \vec{c}_{i, \mathfrak{p}} \vec{c}_{j,\mathfrak{p}}^T ) \vec{c}_{n+1, \mathfrak{p}} \nonumber \\
	= & \lambda^{-1} I_{k\, n+1, \mathfrak{p}} + \sum_{i,j=0}^{n} \alpha_{ij, \mathfrak{p}} I_{ki, \mathfrak{p} } I_{j\, n+1, \mathfrak{p} }
	\label{eq:HGF_w3}
	\end{align}
	\vspace{-0.7cm}
\end{figure}
$G_{ij,\mathfrak{p}}$ equals to the box filtering result $\bm{G}_{ij}(\mathfrak{p})$ of the element-wise production image $\bm{G}_i\bm{G}_j$. Then $\vec{w}_{\mathfrak{p}}(k)$ can be synthesized by composing a set of box filtering results.

\begin{prop}
if the neighborhood $\Omega_{\mathfrak{p}}$ of $\mathfrak{p}$ is a box window, we have $G_{ij,\mathfrak{p}} = \bm{G}_{ij}(\mathfrak{p})$ which is the box filtering result of the element-wise production image $\bm{G}_i\bm{G}_j$ in $\Omega_{\mathfrak{p}}$.
\label{prop:box}
\end{prop}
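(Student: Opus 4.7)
The plan is to prove the equality by straightforward unpacking of definitions, since both sides are expressible as the same sum over the pixels in $\Omega_{\mathfrak{p}}$. First I would substitute the definition of $\vec{c}_{i,\mathfrak{p}}$ and $\vec{c}_{j,\mathfrak{p}}$ into $G_{ij,\mathfrak{p}} = \vec{c}_{i,\mathfrak{p}}^T \vec{c}_{j,\mathfrak{p}}$. This expands the inner product of the two stacked vectors into the explicit sum
\begin{equation*}
G_{ij,\mathfrak{p}} \;=\; \sum_{k=1}^{|\Omega_{\mathfrak{p}}|} \bm{G}_{i}(\mathfrak{q}_k)\,\bm{G}_{j}(\mathfrak{q}_k),
\end{equation*}
where the $\mathfrak{q}_k$ enumerate the pixels of the neighborhood $\Omega_{\mathfrak{p}}$.

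Next I would introduce the element-wise product image $\bm{G}_i\bm{G}_j$, whose value at any pixel $\mathfrak{q}$ is simply $\bm{G}_i(\mathfrak{q})\bm{G}_j(\mathfrak{q})$. The previous display then becomes $\sum_{\mathfrak{q}\in\Omega_{\mathfrak{p}}} (\bm{G}_i\bm{G}_j)(\mathfrak{q})$, i.e., an aggregation of the values of $\bm{G}_i\bm{G}_j$ over the window $\Omega_{\mathfrak{p}}$.

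Finally I would invoke the definition of box filtering: because $\Omega_{\mathfrak{p}}$ is assumed to be a box window centered at $\mathfrak{p}$, the box filter applied to $\bm{G}_i\bm{G}_j$ at location $\mathfrak{p}$ is by definition $\bm{G}_{ij}(\mathfrak{p})=\sum_{\mathfrak{q}\in\Omega_{\mathfrak{p}}}(\bm{G}_i\bm{G}_j)(\mathfrak{q})$, matching the expression just derived and completing the proof.

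There is essentially no analytical obstacle here; the statement is a bookkeeping identity that links the algebraic quantity $\vec{c}_{i,\mathfrak{p}}^T\vec{c}_{j,\mathfrak{p}}$ used in Proposition~\ref{prop:inverse} with the hardware-efficient box-filter primitive. The only mild subtlety worth stating explicitly is that the box window assumption is what makes the neighborhood shape independent of $\mathfrak{p}$ (up to translation), so the same sum pattern can be realized as a single sliding box filter over the whole image rather than being recomputed per pixel; this is the point that lets us later replace all $G_{ij,\mathfrak{p}}$ in Eq.~\eqref{eq:HGF_w3} with precomputed box-filter outputs.
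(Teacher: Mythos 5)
Your proposal is correct and follows essentially the same route as the paper's own proof: expand the dot product $\vec{c}_{i,\mathfrak{p}}^T\vec{c}_{j,\mathfrak{p}}$ into a sum over the pixels of $\Omega_{\mathfrak{p}}$, recognize that sum as the aggregation of the element-wise product image over the window, and match it to the definition of the box filter output at $\mathfrak{p}$. Your closing remark about the box-window assumption enabling a single sliding-window computation is a useful observation, but no further argument is needed.
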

\begin{proof}
We have the dot production $G_{ij,\mathfrak{p}} = \vec{c}_{i,\mathfrak{p}}^T \vec{c}_{j, \mathfrak{p}} = \sum_{k=1}^{|\Omega_\mathfrak{p}|} \bm{G}_{i}(\mathfrak{q}_k) \bm{G}_{j}(\mathfrak{q}_k) =  \sum_{\mathfrak{q} \in \Omega_{\mathfrak{p}}} \bm{G}_{i}(\mathfrak{q}) \bm{G}_{j}(\mathfrak{q}) $ and
the box filtering result $\bm{G}_{ij}(\mathfrak{p}) = \sum_{\mathfrak{q} \in \Omega_{\mathfrak{p}}} \bm{G}_i\bm{G}_j(\mathfrak{q}) = \sum_{\mathfrak{q} \in \Omega_{\mathfrak{p}}} \bm{G}_i(\mathfrak{q})\bm{G}_j(\mathfrak{q}) $. So we conclude $G_{ij,\mathfrak{p}} = \bm{G}_{ij}(\mathfrak{p})$.
\end{proof}

\begin{figure*}[t]
	\vspace{-0.2cm}
	\begin{subfigure}[b]{\linewidth}
		\includegraphics[width=\textwidth, height=3.7cm]{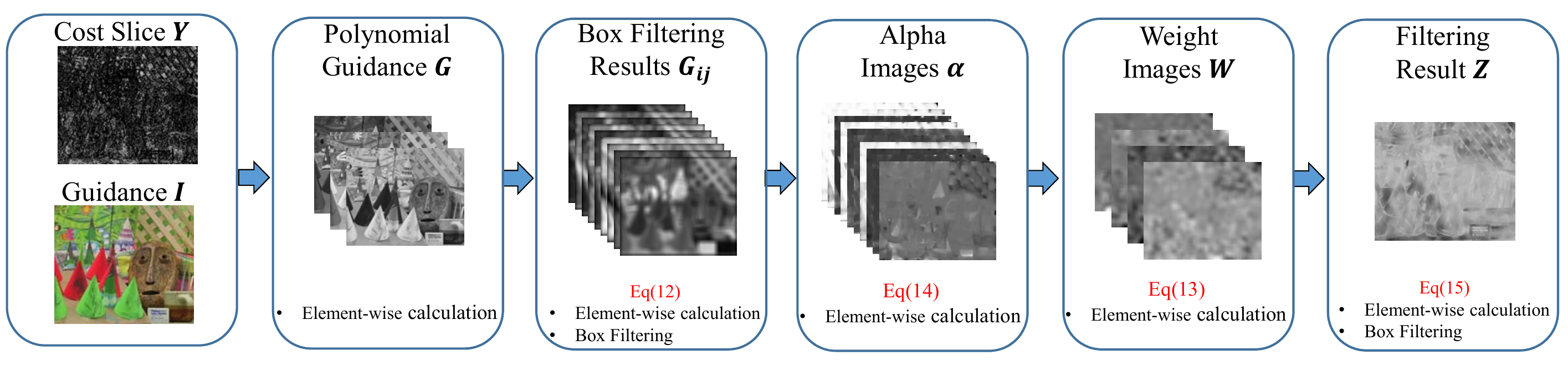}
	\end{subfigure}	
\caption{\textbf{Filtering Flowchart of HGF with the Input from a Multi-Label System}.  At first, we exploit the input guidance to synthesize a polynomial guidance. Then the input cost slice together with the polynomial guidance are used to produce box filtering images $\bm{G}_{ij}$~\eqref{eq:HGF_Iij}. After that we compute $\bm{\alpha}_{ij}$ and $\bm{W}_i$ from $\bm{G}_{ij}$ according to Eq~\eqref{eq:alpha}~\eqref{eq:HGF_w4}. At last, the filtering result is yielded by Eq~\eqref{eq:HGF_z2}. Note that each step only involves element-wise arithmetic computation or box filtering. }
\label{fig:flowchart}
\end{figure*}
\noindent Above all, the $k$th element $\vec{w}_{\mathfrak{p}}(k)$ of $\vec{w}_{\mathfrak{p}}$ can be figured out in two steps:
\begin{enumerate}  
	\item applying box filter to element-wise production image $\bm{G}_i\bm{G}_j$ for $0 \leq i, j \leq n+1$ to produce $\bm{G}_{ij}$; 
	\item computing $\vec{w}_{\mathfrak{p}}(k)$ according to the linear combination~\eqref{eq:HGF_w3} of $\bm{G}_{ij}$. 
\end{enumerate}
Then applying the procedure to each $\vec{w}_{\mathfrak{p}}(k)$, we will compose the vector $\vec{w}_{\mathfrak{p}}$. At last, we note that above two steps computing procedure is only composed by arithmetic calculation and box filtering. The matrix inverse operation is completely eliminated in the computation procedure.

\section{Hardware-Efficient Guided Filter}
\label{sec:HGF}

In this section, we define HGF and present its hardware-efficient implementation in detail.  Fig~\ref{fig:flowchart} plots the overall computing procedure to give a general idea for our HGF.

\subsection{Defintion}

The key assumption of HGF is the generalized linear model~\eqref{eq:HGF_linear_output_model} between the synthesized $n$-channel guidance $\bm{G}$ and the output $\bm{Z}$, where $\vec{w}_{\mathfrak{p}}$ is constant in the window $\Omega_{\mathfrak{p}}$. 
\begin{equation}
\bm{Z}(\mathfrak{q}) = \sum_{i=1}^{n} \vec{w}_{ \mathfrak{p}}(i) \bm{G}_{i}(\mathfrak{q}) + \vec{w}_{\mathfrak{p}}(0), \; \forall \mathfrak{q} \in \Omega_{\mathfrak{p}}
\label{eq:HGF_linear_output_model}
\end{equation} 

According to the local multipoint filtering framework~\cite{Katkovnik_IJCV_2010}, HGF comprises two major steps: 1) \emph{multipoint estimation:} calculating the estimates for a set of points within a local support, and 2) \emph{aggregation:} fusing multipoint estimates available for each point. Specifically, in the first step, HGF estimates the coefficients $\vec{w}_{\mathfrak{p}}$ of model \eqref{eq:HGF_linear_output_model} by minimizing the linear ridge regression~\eqref{eq:HGF_linear_regression}, where $\bm{Y}$ denotes an input image and the closed-form solution of the minimizer $\vec{w}_{\mathfrak{p}}$ is Eq~\eqref{eq:HGF_w1}.
\begin{figure}[h]
\vspace{-0.3cm}
\begin{equation}\hspace{-0.2cm}
\min_{\vec{w}_{\mathfrak{p}}} \lambda \| \vec{w}_{\mathfrak{p}} \|_2^2  +  \sum_{\mathfrak{q} \in \Omega_{\mathfrak{p}}} ( \bm{Y}(\mathfrak{q}) - \sum_{i=1}^{n} \vec{w}_{\mathfrak{p}}(i) \bm{G}_{i}(\mathfrak{q}) - \vec{w}_{\mathfrak{p}}(0) )^2
\label{eq:HGF_linear_regression}
\end{equation}
\vspace{-0.5cm}
\end{figure}
In the second step, HGF puts the minimizer $\vec{w}_{\mathfrak{p}}$ of optimization~\eqref{eq:HGF_linear_regression} into Eq~\eqref{eq:HGF_linear_output_model} and obtains a set of values $\bm{Z}'_{\mathfrak{p}}(\mathfrak{q}) = \sum_{i=1}^{n} \vec{w}_{i, \mathfrak{p}} \bm{G}_{i}(\mathfrak{q}) + \vec{w}_{0, \mathfrak{p}}, \mathfrak{q} \in \Omega_{\mathfrak{p}}$for a given window $\Omega_{\mathfrak{p}}$. So each pixel $\mathfrak{q}$ has $| \Omega_{\mathfrak{p}} |$ values. HGF aggregates these values together and considers their mean $ \frac{1}{|\Omega_{\mathfrak{q}}|} \sum_{\mathfrak{p} \in \Omega_{\mathfrak{q}}} \bm{Z}'_{\mathfrak{p}}(\mathfrak{q})$ as final filtering result $\bm{Z}(\mathfrak{q})$. So we have
\begin{align}
\bm{Z}(\mathfrak{q}) 
= & \frac{1}{|\Omega_{\mathfrak{q}}|} \sum_{\mathfrak{p} \in \Omega_{\mathfrak{q}}} \sum_{i=1}^{n} \vec{w}_{ \mathfrak{p}}(i) \bm{G}_{i}(\mathfrak{q}) + \vec{w}_{\mathfrak{p}}(0) \nonumber \\
= & \sum_{i=1}^{n} \vec{w}^a_{\mathfrak{q}}(i) \bm{G}_{i}(\mathfrak{q}) + \vec{w}^a_{\mathfrak{q}}(0)
\label{eq:HGF_z1}
\end{align}
where $\vec{w}^a_{\mathfrak{q}} = \frac{1}{|\Omega_{\mathfrak{q}}|} \sum_{\mathfrak{p} \in \Omega_{\mathfrak{q}}} \vec{w}_{ \mathfrak{p}} $ is the average of $\vec{w}_{ \mathfrak{p}}$ in $\Omega_{\mathfrak{q}}$.

\subsection{Polynomial Guidance}

GF exploits the linear model~\eqref{eq:GF_linear_model} to estimate filtering results from an input guidance. However, it is unreasonable to expect that the linear model works well under all circumstances as the linear models~\eqref{eq:GF_linear_model} of GF are apt to underfits data due to its simplicity and thus produces over-smoothing filtering results.  We address the problem by synthesizing a polynomial guidance from a raw image.


The polynomial guidance is designed to introduce nonlinearity to the generalized linear model~\eqref{eq:HGF_linear_output_model}.  For example, Eq~\eqref{eq:HGF_input_output_polynomial_model} shows
\begin{figure}[h]
	\vspace{-0.3cm}
	\begin{equation}
	\bm{Z}(\mathfrak{q}) = \sum_{i=1}^{d} \vec{w}_{ \mathfrak{p}}(i) \bm{I}^{i}(\mathfrak{q}) + \vec{w}_{\mathfrak{p}}(0), \; \forall \mathfrak{q} \in \Omega_{\mathfrak{p}}
	\label{eq:HGF_input_output_polynomial_model}
	\end{equation}
	\vspace{-0.3cm}
\end{figure}
the polynomial model with a gray input  guidance $\bm{I}$, 
where $d$ is the degree of the polynomial function. Compared with the linear model~\eqref{eq:HGF_linear_output_model}, we can find the equivalence between the linear model~\eqref{eq:HGF_linear_output_model} and the polynomial model~\eqref{eq:HGF_input_output_polynomial_model} if we assume $\bm{G}_i = \bm{I}^i$. This inspires us to 
introduce nonlinearity to HGF by synthesizing a polynomial guidance, where each channel of the synthesized polynomial guidance is produced by a polynomial function operated on a channel of the input guidance.

This method can be easily extended to the multichannel guidance. In the case when the input guidance is multichannel, it is straightforward to apply the map procedure $\bm{G}_{(i-1)d+j} = \bm{I}_i^{j}$ to each channel independently, where $\bm{I}_i$ denotes the $i$th channel of the multichannel guidance $\bm{I}$ and $n$ is the channel number of $\bm{I}$. After that we stack produced results of each channel of the input multichannel guidance $\bm{I}$ to synthesize our polynomial guidance. Mathematically, the linear model~\eqref{eq:HGF_linear_output_model} in this situation is equivalent to following nonlinear polynomial model~\eqref{eq:HGF_input_output_polynomial_model_color}. So we successfully endow the nonliearity to the generalized linear model~\eqref{eq:HGF_linear_output_model} of HGF.
\begin{figure}[h]
	\vspace{-0.5cm}
	\begin{equation}
	\bm{Z}(\mathfrak{q}) = \sum_{i=1}^{n}\sum_{j=1}^{d} \vec{w}_{ \mathfrak{p}}((i-1)d+j) \bm{I}^{j}_i(\mathfrak{q}) + \vec{w}_{\mathfrak{p}}(0) 
	\label{eq:HGF_input_output_polynomial_model_color}
	\end{equation}
	\vspace{-0.5cm}
\end{figure}

\subsection{Hardware-Efficient Implementation}


Here we present the matrix inverse algorithm tailored to $(\lambda \bm{E} + \bm{X}^T_{\mathfrak{p}} \bm{X}_{\mathfrak{p}} )^{-1}$ to make full use of the parallel computing ability of hardwares. 
HGF in the first step calculates the minimizer~\eqref{eq:HGF_w1} of the linear ridge regression~\eqref{eq:HGF_linear_regression}. However traditional matrix inverse algorithms cannot fully take advantages of the parallel computing hardwares. Fortunately, Eq~\eqref{eq:HGF_w3} sheds light on a hardware-efficient way to compute $\vec{w}_{\mathfrak{p}}$ because it guarantees the $k$th element $\vec{w}_{\mathfrak{p}}(k)$ of $\vec{w}_{\mathfrak{p}}$ is the linear combination of the box filtering result $\bm{G}_{ij}(\mathfrak{p})$. Specifically, putting $\mathcal{B}(\bm{X})$ denote the box filtering result of an image $\bm{X}$, $\bm{W}_i$ and $\bm{\alpha}_{ij}$ record all values of $\vec{w}_{\mathfrak{p}}(i)$ and $\alpha_{ij,\mathfrak{p}}$ for arbitrary $\mathfrak{p}$ in the image domain (\ie $\bm{W}_i(\mathfrak{p}) = \vec{w}_{\mathfrak{p}}(i)$, $\bm{\alpha}_{ij} (\mathfrak{p}) = \alpha_{ij,\mathfrak{p}}$),  we can generalize Eq~\eqref{eq:HGF_w3} to following element-wise arithmetic calculations~\eqref{eq:HGF_w4} of box filtering results $\bm{G}_{ij}$~\eqref{eq:HGF_Iij}. Here $\bm{G}_0$ denotes matrix of ones, $\bm{G}_{i} (1\leq i \leq n)$ stand for the $i$th channel of the synthesized polynomial $n$-channel guidance $\bm{G}$, $\bm{G}_{n+1}$ be the alias of the input image $\bm{Y}$.
\begin{figure}[h]
\vspace{-0.6cm}
\begin{align}
\bm{G}_{ij} &= \mathcal{B}(\bm{G}_{i} \bm{G}_{j})  \label{eq:HGF_Iij} \\
\bm{W}_i &= \lambda^{-1} \bm{G}_{\kappa n+1} + \sum_{i,j=0}^{n} \bm{\alpha}_{ij} \bm{G}_{\kappa i} \bm{G}_{j n+1}
\label{eq:HGF_w4}
\end{align}
\vspace{-0.6cm}
\end{figure}
Similarly, the updating formulate of $\alpha_{ij,\mathfrak{p}}^{\kappa}$ can also  be modified to 
the element-wise arithmetic calculation of matrices~\eqref{eq:alpha}, 
\begin{figure}[h]
	\vspace{-0.5cm}
	\begin{equation}
	\bm{\alpha}_{ij}^{\kappa} = \begin{cases}
	\bm{F}_{\mathfrak{p}}^{\kappa} + \bm{\alpha}_{ij}^{\kappa-1}  & i < \kappa, j < \kappa \\
	\lambda^{-1}  \bm{\gamma}^\kappa \sum_{n=0}^{\kappa-1} \bm{\alpha}_{in}^{\kappa-1} \bm{G}_{n\kappa}  &  i < \kappa, j = \kappa \\
	\lambda^{-1} \bm{\gamma}^\kappa \sum_{m=0}^{\kappa-1} \bm{\alpha}_{mj}^{\kappa-1} \bm{G}_{\kappa m}  &  i = \kappa, j < \kappa  \\
	\lambda^{-2} \bm{\gamma}^\kappa & i = j = \kappa 
	\end{cases}
	\label{eq:alpha}
	\end{equation}
	\vspace{-0.5cm}
\end{figure}
where
$\bm{\alpha}_{00}^{0} = - (\lambda + \bm{G}_{00})^{-1}$, 
  $\bm{F}^{\kappa} = \sum_{m,n=0}^{\kappa-1} \bm{\alpha}^{\kappa-1}_{im} \bm{\alpha}^{\kappa-1}_{nj} \bm{G}_{m \kappa}  \bm{G}_{\kappa n}$ and $\bm{\gamma}^\kappa = - (1 + \lambda^{-1} \bm{G}_{\kappa\kappa} + \sum \nolimits_{m,n=0}^{\kappa-1} \bm{\alpha}^{\kappa-1}_{mn}  \bm{G}_{\kappa m}  \bm{G}_{n \kappa} )^{-1}$.

HGF in the second step computes filtering results $\bm{Z}(\mathfrak{p})$ according to the average of coefficients $\vec{w}_{\mathfrak{p}}$. Define the average operator $\mathcal{A}(\bm{X}) = \mathcal{B}(\bm{X}) / \mathcal{B}(\bm{G}_0)$, we can formulate the element-wise arithmetic calculation form of Eq~\eqref{eq:HGF_z1} as
\vspace{-0.2cm}
\begin{equation}
\bm{Z} = \sum_{i=1}^{n} \mathcal{A}(\bm{W}_i) \bm{G}_i + \mathcal{A}(\bm{W}_0)  
\label{eq:HGF_z2} 
\end{equation}

Taking a closer look at Eq~\eqref{eq:HGF_Iij}~\eqref{eq:HGF_w4}~\eqref{eq:alpha}~\eqref{eq:HGF_z2}, we find that all equations only involve two computation types (Fig~\ref{fig:flowchart} sums up  computation types in each step and overall computation flowchart): one is  element-wise arithmetic calculations of matrices, the other is box filtering for images. Both computations can easily exert the parallelism power of current hardware. More importantly, there are highly optimized libraries for CPU and GPU. In the next, we present how to implement them in a hardware-efficient way. 

\vspace{1mm}
\emph{\textbf{Element-wise arithmetic calculations}} is a typical data parallel task because it distributes the operation over elements of a matrix. In parallel computing literatures, it belongs to the map pattern~\cite{McCool_BOOK_2012} which applies an elemental function to an actual collection of input data. Current CPU and GPU do well in these calculations as the SIMD instructions of CPU and GPU can benefit from same computations done on different pieces of data in the data parallel task. Although inverting matrices simultaneously belongs to the map pattern too, it cannot be as efficient as our method because inverting operation is not directly supported by hardware and thus can not benefit from  SIMD instructions. In contrast the arithmetic computations can be directly allocated to a core of CPU or thread of GPU for parallel computing. Finally, we note that element-wise arithmetic calculations are supported by many softwares or libraries such as 
such as Matlab, ViennaCL~\cite{Rupp:ViennaCL} and Arrayfire~\cite{Yalamanchili_CODE_2015}.

\vspace{1mm}
\emph{\textbf{Box filtering}} produces a smoothing image of which has a value equal to the sum of its neighboring pixels in the input image and can be computed in linear time from the Summed Area Table~\cite{Crow_SGF_1984}. Different from element-wise calculation, box filtering belongs to the stencil pattern~\cite{McCool_BOOK_2012} that is a generalization of the map pattern in which an elemental function can access not only a single element in an input collection but also a set of neighbors. A detailed description on how to implement the stencil pattern efficiently on parallel devices can be found in the book~\cite{McCool_BOOK_2012}. Fortunately, we do not need to implement the box filtering manually since two libraries  OpenCV of Intel and NPP of Nvidia already have offered.

\section{More Discussion for HGF}
\label{sec:discussion}

In this section, we clarity the connection and differences between HGF and GF. Moreover, the performance and implementation concerns of HGF will also be discussed.

\subsection{Connection with GF}

Our matrix inverse algorithm can also be used to speed up GF. Specifically, GF minimizes optimization~\eqref{eq:GF_linear_regression} to determine the coefficient $\vec{w}_{\mathfrak{p}}$, where  $\vec{w}_{\mathfrak{p}}$ is a $4$ elements vector.
\begin{figure}[h]
	\vspace{-0.6cm}
	\begin{equation} 
	\min_{\vec{w}_{\mathfrak{p}}  } \lambda  \sum_{i=1}^{3} 
	\vec{w}_{\mathfrak{p}}^2(i)   +  \sum_{\mathfrak{q} \in \Omega_{\mathfrak{p}}} ( \bm{Y}(\mathfrak{q}) - \sum_{i=1}^{3} \vec{w}_{\mathfrak{p}}(i) \bm{I}_{i}(\mathfrak{p}) - \vec{w}_{\mathfrak{p}}(0) )^2
	\label{eq:GF_linear_regression}
	\end{equation}
	\vspace{-0.8cm}
\end{figure}
Let $\vec{c}_{i, \mathfrak{p}}$ be a $n$ element vector extracted from $\bm{I}_i$ in $\Omega_{\mathfrak{p}}$, $\bm{X}_{\mathfrak{p}} = [\vec{c}_{1, \mathfrak{p}}, \vec{c}_{2, \mathfrak{p}}, \vec{c}_{3, \mathfrak{p}}] $, $\vec{\bm{1}}$ denote the identity vector,  $\vec{x}_{\mathfrak{p}} =  \frac{1}{n} \bm{X}^T \vec{\bm{1}}$, $c_{n+1, \mathfrak{p}} = \frac{1}{n} \vec{c}_{n+1, \mathfrak{p}}^T  \vec{\bm{1}} $, $\bm{X}'^T_{\mathfrak{p}} = \bm{X}^T_{\mathfrak{p}} - \vec{1} \vec{x}^T_{\mathfrak{p}} $ and $\vec{c}'_{n+1, \mathfrak{p}} = \vec{c}_{n+1, \mathfrak{p}} - c_{n+1, \mathfrak{p}} \vec{\bm{1}}$, 
 then  $
\vec{w}_{\mathfrak{p}}(0) = c_{n+1, \mathfrak{p}} - \vec{w}^T_{\mathfrak{p}} \vec{x}_{\mathfrak{p}} $.
\begin{figure}[h]\vspace{-0.5cm}
	\begin{equation} 
     [\vec{w}_{\mathfrak{p}}(1), \vec{w}_{\mathfrak{p}}(2), \vec{w}_{\mathfrak{p}}(3)]^T = (\lambda \bm{E} + \bm{X}'^T_{\mathfrak{p}} \bm{X}'_{\mathfrak{p}} )^{-1}\bm{X}'^T_{\mathfrak{p}} \vec{c}'_{n+1, \mathfrak{p}}
\label{eq:GF_w}
	\end{equation}
	\vspace{-0.6cm}
\end{figure}
and the last three terms of    $\vec{w}_{\mathfrak{p}}$ can be computed by Eq~\eqref{eq:GF_w}. 
The form of Eq~\eqref{eq:GF_w} is same to Eq~\eqref{eq:HGF_w1}. Hence we can apply the same technique to invert the matrix.

\subsection{Differences From GF}

HGF is similar to GF but not same to it. Simply put, there are three major differences between the two filters:

\emph{\textbf{The guidance is different}}.  GF directly takes a gray/color image  as its guidance $\bm{I}$. Contrarily, HGF synthesizes a multi-channels polynomial guidance $\bm{G}$ from a gray or color input image. With increasing channel number, the guidance carries more and more information.

\emph{\textbf{The cost function is different}}. The regularization term of GF does not punish $\vec{w}_{\mathfrak{p}}(0)$. In contrast, HGF treats all elements of $\vec{w}_{\mathfrak{p}}$ equally in the cost function~\eqref{eq:HGF_linear_regression} as the minimizer~\eqref{eq:HGF_w2} can be computed in a hardware-efficient way and the final result of HGF in the multi-label system is much better according to our experiments.

\emph{\textbf{The matrix inverse algorithm is different}}. GF does not take any special measure to accelerate the speed of inverting matrices. On the contrary, HGF designs an effective matrix inverse algorithm to fully utilize the parallel ability of current hardwares. As a result, the computational procedure of HGF only comprises two hardware-efficient operations: element-wise arithmetic calculations and  box filtering.

\subsection{Performance Evaluation}

\begin{table}[b]
	\vspace{-0.2cm}
	\centering
	\begin{tabular}{|c|c|c|c|c|}
		\hline
		Algorithms & $3$ & $5$ & $7$ & $9$ \\
		\hline \hline 
		$\text{GF}_1$	&     $0.85s$           &      $1.36s$          &        $2.01s$        &        $2.83s$        \\
		$\text{GF}_2$	&     $0.17s$           &      $0.35s$          &        $\diagup$        &        $\diagup$        \\
		HGF	&     $0.08s$           &      $0.16s$          &        $0.25s$        &        $0.43s$        \\
		\hline 
	\end{tabular}
	\caption{Execution time to filter $10^3 \times 10^3$ gray images with different guidances on CPU, where the channel number of guidance varies from $3$ to $10$. }
	\label{tab:time_matlab}
\end{table}

Our experiments are conducted on a Laptop with an Intel i7 CPU and GTX 960 GPU. 
In order to compare the running performance fairly and comprehensively, we exploit two C++ libraries (\ie OpenCV and Arrayfire~\cite{Yalamanchili_CODE_2015}) to implement GF and HGF, where the Summed Area Table~\cite{Crow_SGF_1984} accelerated box filtering and the three matrix inverse algorithms are incorporated into the two filters. Here the three matrix inverse algorithms are:
\begin{itemize}   \vspace{-0.5mm}
	\item \textbf{Built-in LU matrix inverse algorithm:} We use the built-in matrix inverse function ``inv'' of OpenCV to compute final filtering results of GF. The built-in function takes a highly optimized LU matrix inverse algorithm and the input matrix can be arbitrary size.
	\item \textbf{Analytic solution of matrix inverse algorithm:} The closed-form solution of each invertible matrix can be expressed by adjugate matrices. It is an efficient way to calculate the inverse of small matrices but is inefficient for large matrices.  The built-in function ``GuidedFilter'' of OpenCV takes it to compute color-guidance results of GF. We only extend it to inverse $5 \times 5$ matrices because the matrix inverse code becomes much complex  when the matrix size is large than $5$ and thus it is nearly impossible to write the code of inverting $7 \times 7$ and $9 \times 9$ matrices by hand (the analytic solutions of $7 \times 7$ and $9 \times 9$ matrix inverse are listed in the supplemental material for demonstration).
	\item \textbf{Our hardware-efficient matrix inverse algorithm:} We incorporate it with our HGF to produce final results. Note that unlike the first algorithm that can invert arbitrary matrices and the second algorithm that only suits to a small matrix,  our method can invert a  arbitrary size matrix only if the matrix is invertible and can be presented by $\lambda \bm{E} + \sum_{i=0}^{n}  \vec{c}_{i, \mathfrak{p}} \vec{c}_{i,\mathfrak{p}}^T $.
\end{itemize}

OpenCV is taken to reimplement GF according to its public Matlab code, where the C++ code is basically a line-by-line translation of the Matlab code and the matrix inverse algorithm is provided by the built-in ``inv'' function of OpenCV and our implementation according to the analytic solution of matrix inverse. We denote GF adopting the two implementations as $\text{GF}_1$ and $\text{GF}_2$. Since both the algorithm of GF and its Matlab code published by He \etal~\cite{He_PAMI_2013} do not consider parallelism or SIMD instructions, we compile the C++ code of GF with default complier option. Similarly, we do not take tricks to speed up HGF: the code of HGF  is also compiled by default compiler option and all parallel computing infrastructure is provided by Arrayfire.

\begin{figure}[t]
	\centering	
	\begin{subfigure}[b]{0.48\linewidth}
		\includegraphics[width=\textwidth]{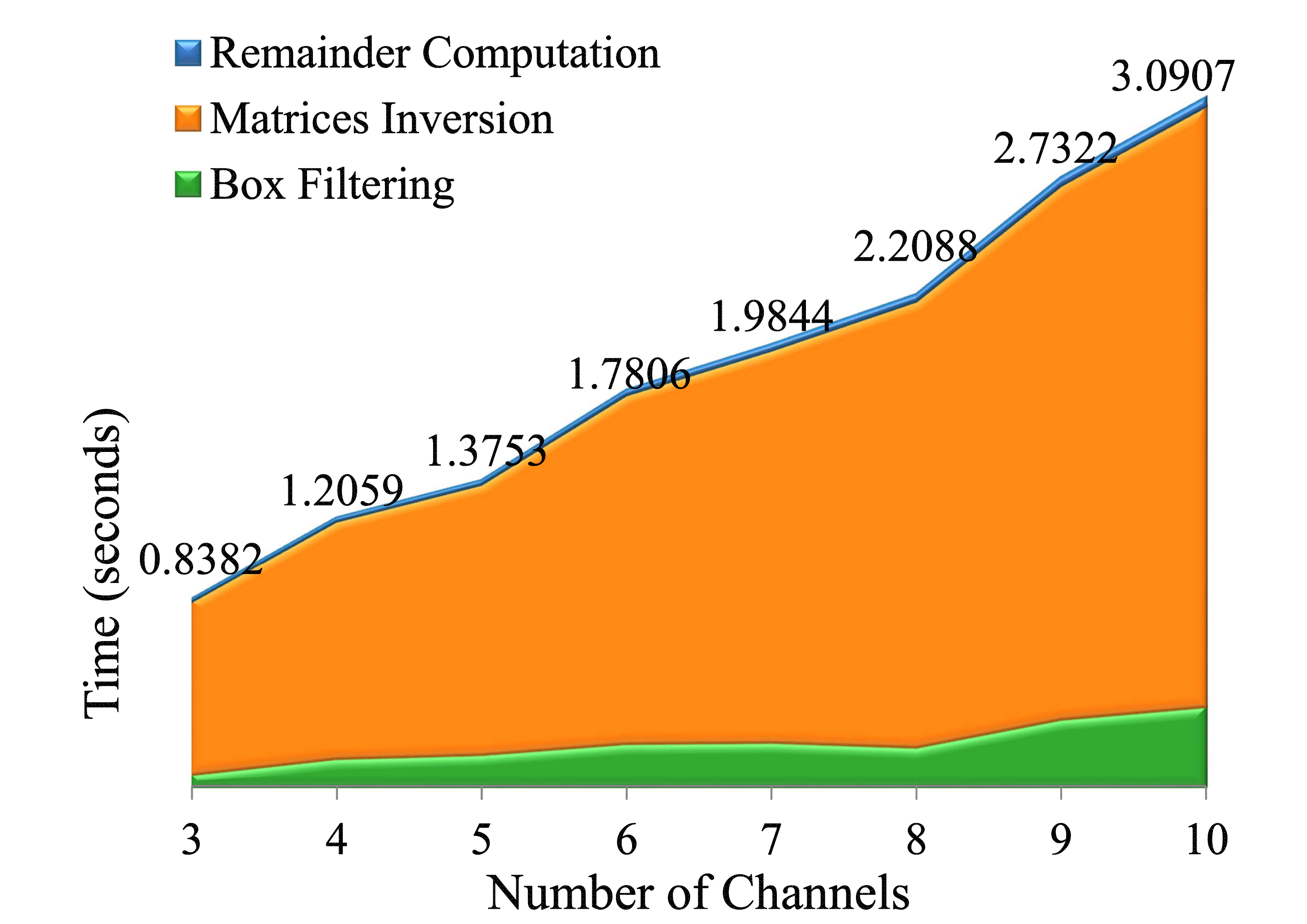}
		\caption{GF}
	\end{subfigure}	
	\begin{subfigure}[b]{0.48\linewidth}
		\includegraphics[width=\textwidth]{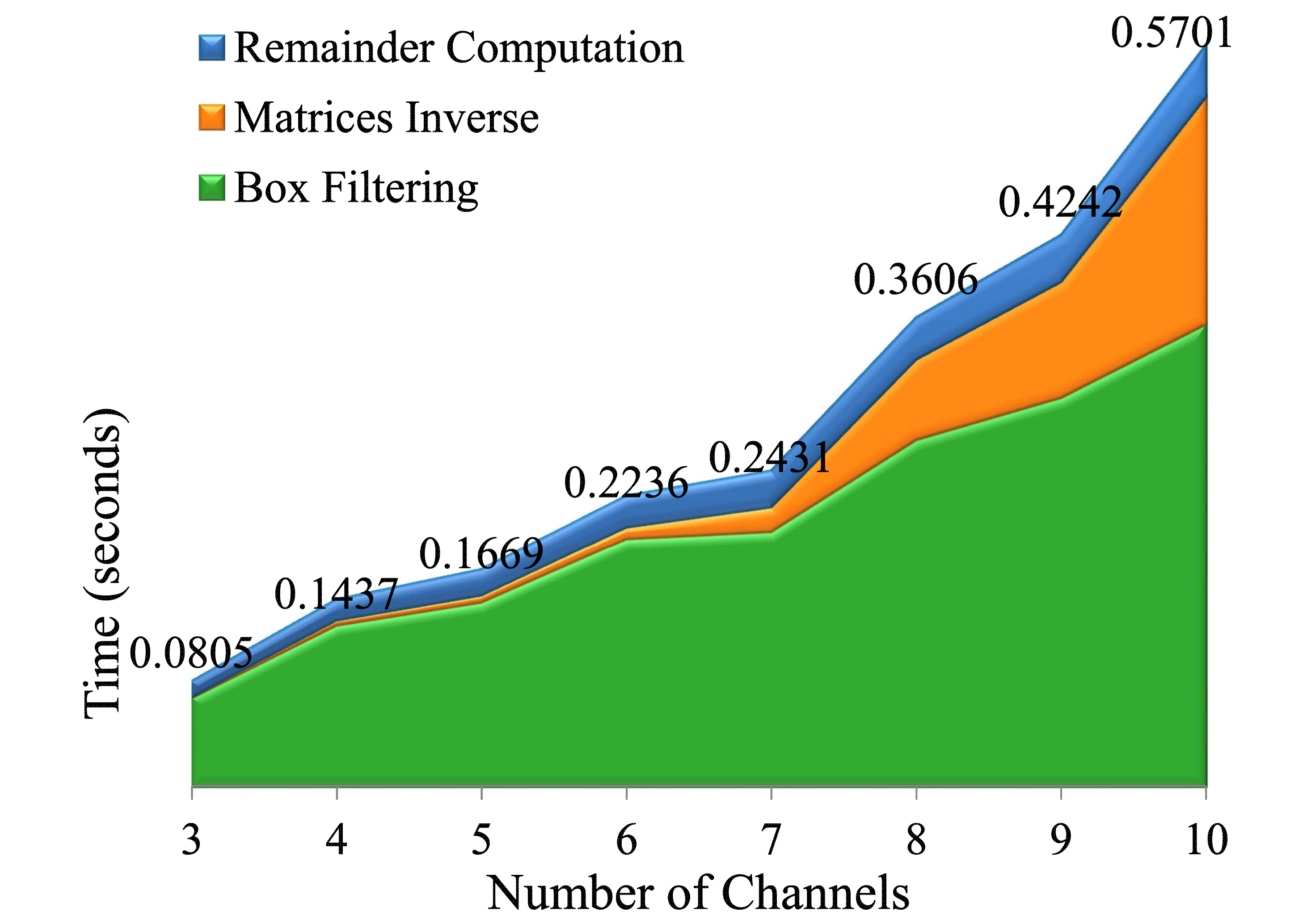}
		\caption{HGF}
	\end{subfigure}	
	\caption{\textbf{Execution Time Anatomy:} (a) (b) illustrate run times of the box filtering in preparing $\lambda \bm{E} + \bm{X}^T_{\mathfrak{p}} \bm{X}_{\mathfrak{p}} $, inverting this matrix and remainder computation of GF and HGF, where the green, orange and blue regions denote the run time of box filtering, matrix inverse and the remainder computation. }
	\label{fig:contribution}
\end{figure}

\begin{figure*}[t]
	\centering	
	\begin{subfigure}[b]{0.16\linewidth}
		\includegraphics[width=\textwidth]{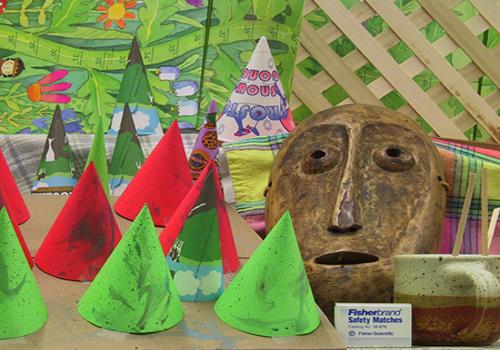}
	\end{subfigure}	
	\begin{subfigure}[b]{0.16\linewidth}
		\includegraphics[width=\textwidth]{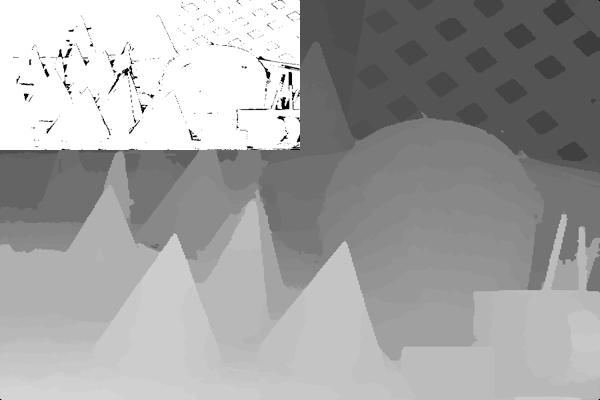}
	\end{subfigure}	
	\begin{subfigure}[b]{0.16\linewidth}
		\includegraphics[width=\textwidth]{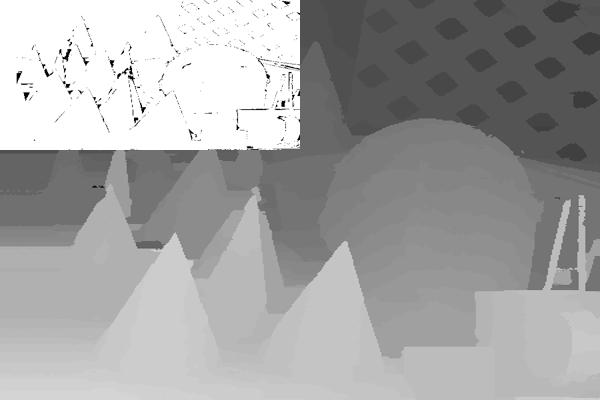}
	\end{subfigure}	
	\begin{subfigure}[b]{0.16\linewidth}
		\includegraphics[width=\textwidth]{stereo_Cone_CLMF}
	\end{subfigure}	
	\begin{subfigure}[b]{0.16\linewidth}
		\includegraphics[width=\textwidth]{stereo_Cone_MLPA}
	\end{subfigure}	
	\begin{subfigure}[b]{0.16\linewidth}
		\includegraphics[width=\textwidth]{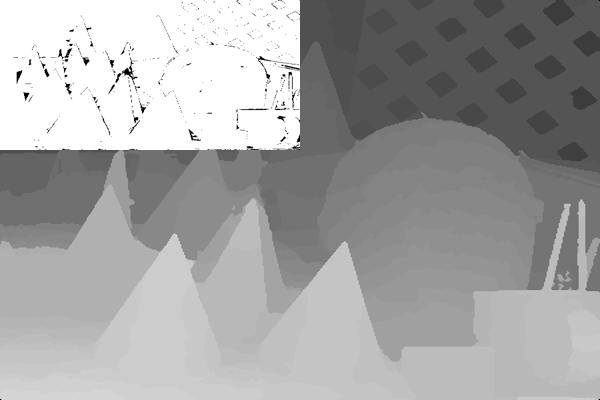}
	\end{subfigure}

	\begin{subfigure}[b]{0.16\linewidth}
		\includegraphics[width=\textwidth]{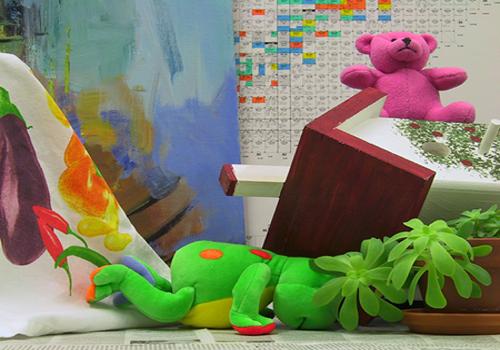}
		\caption{Input}
	\end{subfigure}	
	\begin{subfigure}[b]{0.16\linewidth}
		\includegraphics[width=\textwidth]{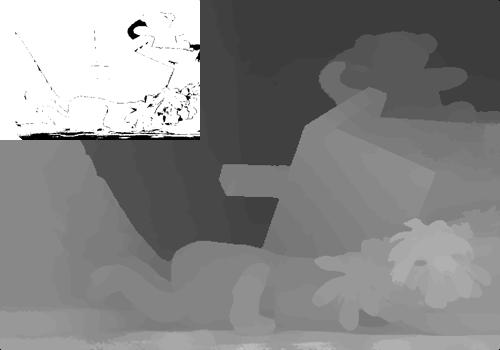}
		\caption{BF}
	\end{subfigure}	
	\begin{subfigure}[b]{0.16\linewidth}
		\includegraphics[width=\textwidth]{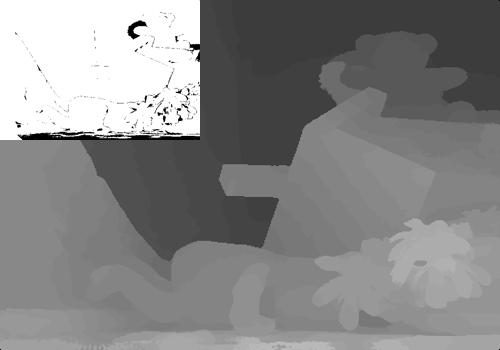}
		\caption{GF}
	\end{subfigure}	
	\begin{subfigure}[b]{0.16\linewidth}
		\includegraphics[width=\textwidth]{stereo_Teddy_CLMF}
		\caption{CLMF}
	\end{subfigure}	
	\begin{subfigure}[b]{0.16\linewidth}
		\includegraphics[width=\textwidth]{stereo_Teddy_MLPA}
		\caption{MLPA}
	\end{subfigure}	
	\begin{subfigure}[b]{0.16\linewidth}
		\includegraphics[width=\textwidth]{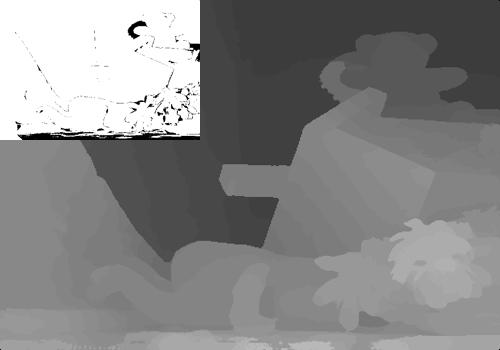}
		\caption{HGF}
	\end{subfigure}	
	\setcounter{figure}{5}
	\caption{\textbf{Stereo Matching Results:} (a) is input color image.  (b) (c) (d) (e) and (f) illustrate the stereo matching results of BF, GF, CLMF, MLPA and HGF, respectively. In the left upper corners of (b) (c) (d) (e) and (f), we show comparisons with the respective groundtruth (error larger than one pixel in nonoccluded are denoted by black and in occlude regions are denoted by gray) and the Percentage of Bad Pixel (PBP) scores are listed in the bottom of each image. }
	\label{fig:stereo}
\end{figure*}

The $\text{GF}_1$ and HGF rows of Table~\ref{tab:time_matlab} report execution times of GF and HGF for a $10^3 \times 10^3$ guidance with increasing channel number on CPU. According to our experiments, HGF is much faster than GF. Due to the smaller running cost of our matrix inverse algorithm, it is more efficient even if channel number of guidance is large. We also test the GF code using analytic solution of matrix inverse. The experimental data for $3/5$-channel guidances are reported in the $\text{GF}_2$ row. We can observe the speed of the analytic solution is faster than generalized matrix inverse function ``inv'' function of OpenCV, but is still slower than our matrix inverse algorithm. In short, the analytic solution achieves a relative fast speed by paying an extremely heavy cost to translate it into element-wise calculation. 




\begin{table}[t]
	\hspace{-0.2cm}
	\begin{tabular}{|c|c|c|c|c|}
		\hline
		Size/$n$             & 3    & 5    & 7    & 9    \\ \hline
		$250 \times 250$   & $3.9ms$ & $10.7ms$ & $39.9ms$ & $110.2ms$ \\ \hline
		$500 \times 500$   & $3.7ms$ & $11.1ms$ & $40.1ms$ & $112.4ms$ \\ \hline
		$1000 \times 1000$ & $4.1ms$ & $10.9ms$ & $41.3ms$ & $115.9ms$ \\ \hline
		$2000 \times 2000$ & $4.3ms$ & $11.2ms$ & $42.7ms$ & $121.8ms$ \\ \hline
	\end{tabular}
	\caption{Execution time of HGF on Nvidia GPU with increasing guidances channel number $n$, where the input is a gray image with increasing sizes, the channel number of guidance varies from $3$ to $9$. }
	\label{tab:time_arrayfire}
\end{table}

Box filtering takes up most of the run time of HGF. The area graph in Fig~\ref{fig:contribution} displays the execution time of box filtering in preparing matrices $\lambda \bm{E} + \bm{X}^T_{\mathfrak{p}} \bm{X}_{\mathfrak{p}} $~\eqref{eq:HGF_w1}, inverting these matrices and remainder computation in GF and HGF, where the green denotes the box filtering time, the orange presents matrix inverse time  and the horizontal axis is the channel number of guidance. We can observe that box filtering in HGF consumes more than half of the time. This inspires us to precompute the box filtering results $\bm{G}_{ij} = \mathcal{B}(\bm{G}_{i} \bm{G}_{j})$ of all pair-wise production image $\bm{G}_{i} \bm{G}_{j}$ in Eq~\eqref{eq:HGF_w4}~\eqref{eq:alpha} \eqref{eq:HGF_z2} to accelerate the computational speed of HGF further. Unlike HGF, matrix inverse operation in GF spends most of the time. So GF cannot take the same strategy to reduce its run time. At last, we note that an unignorable part of the overall execution time of HGF is spent by the ``remainder'' part denoted by the blue in Fig~\eqref{fig:contribution} and most time of this part is payed for box filtering as Eq~\eqref{eq:HGF_z2} involves $n$ box filters. Here $n$ is the channel number of the guidance.  


To demonstrate the actual power of our algorithm, we accelerate HGF by precomputing all-pair box filtering results $\bm{G}_{ij} = \mathcal{B}(\bm{G}_{i} \bm{G}_{j})$. Table~\ref{tab:time_arrayfire} reports the execution time tested on GPU with increasing guidance channel number. 
For guidance images of which the channel number is smaller than $7$, the performance is realtime. It is also worth noting that the performance of HGF scales well with the size of input image (\ie the execution time does not change with the data size). We contributes this ability to the parallel computing ability of HGF.

\begin{figure}[t]
	\centering	
	\begin{subfigure}[b]{0.48\linewidth}
		\includegraphics[width=\textwidth]{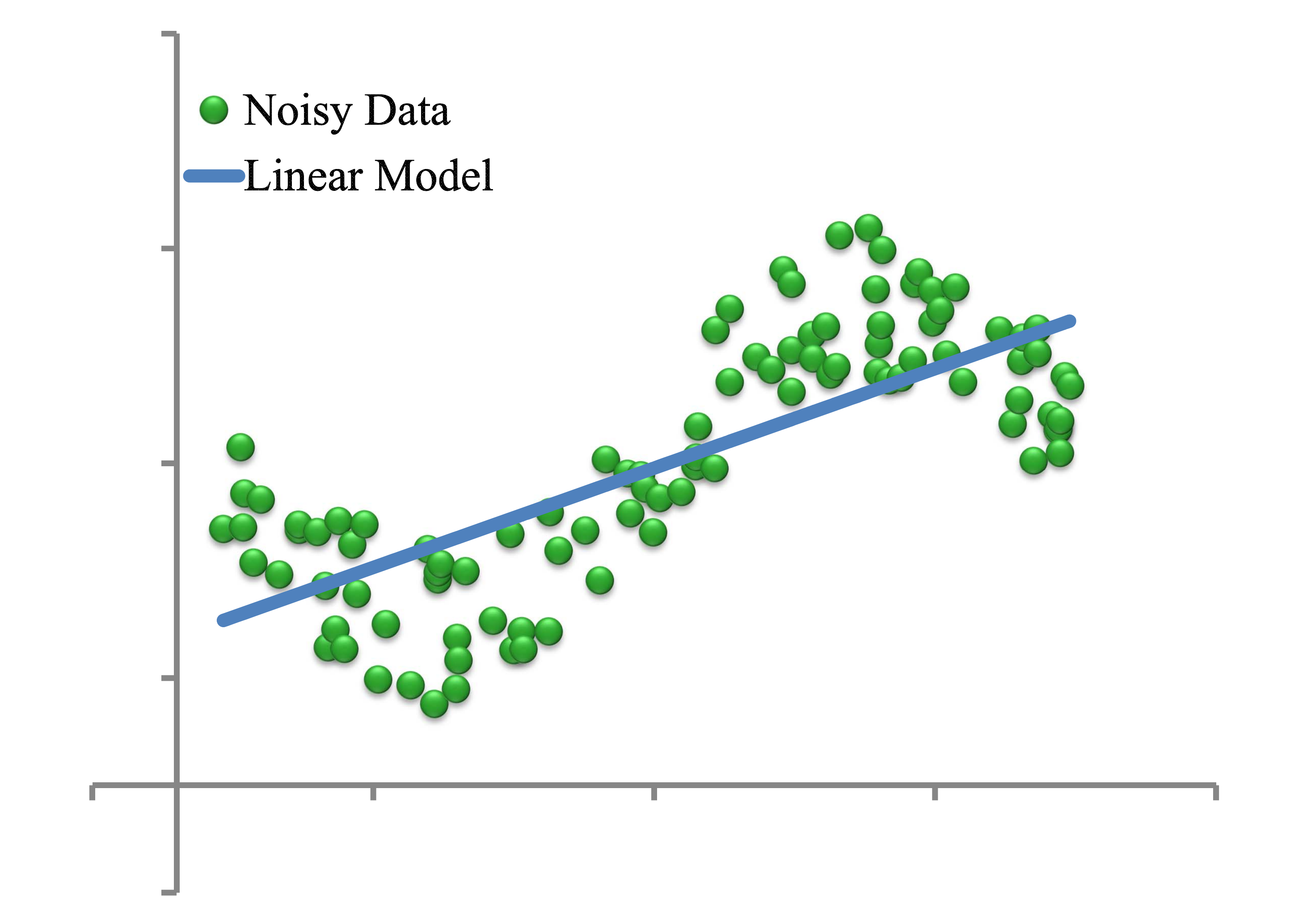}
		\caption{Linear model}
	\end{subfigure}	
	\begin{subfigure}[b]{0.48\linewidth}
		\includegraphics[width=\textwidth]{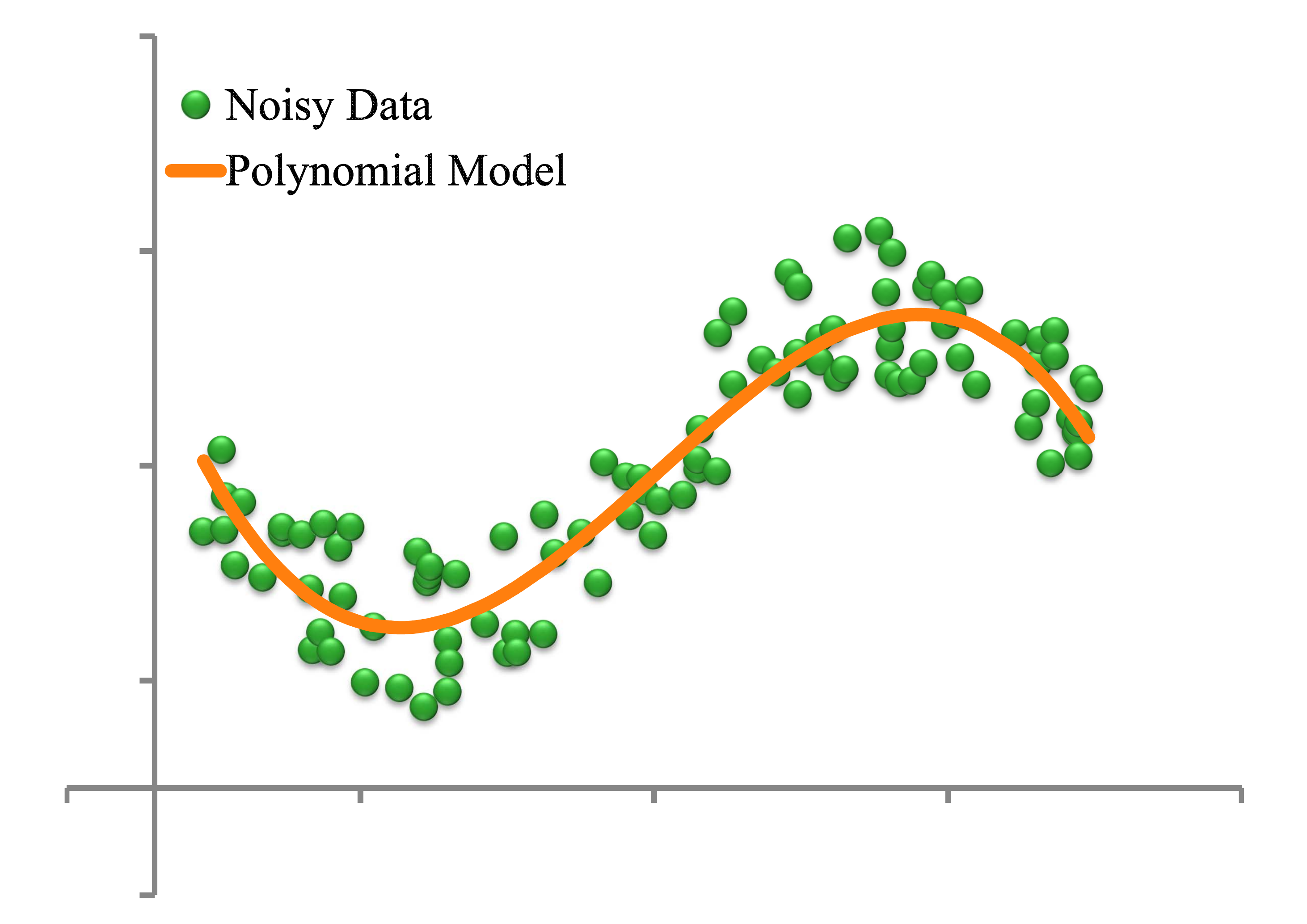}
		\caption{Polynomial model}
	\end{subfigure}	
\setcounter{figure}{4}
	\caption{\textbf{Fitting curves of linear model and nonlinear polynomial model on noisy data.} According to the fitting curves, the polynomial model is more flexible than the linear model. }
	\label{fig:curves}
\end{figure}

\begin{figure*}[t]
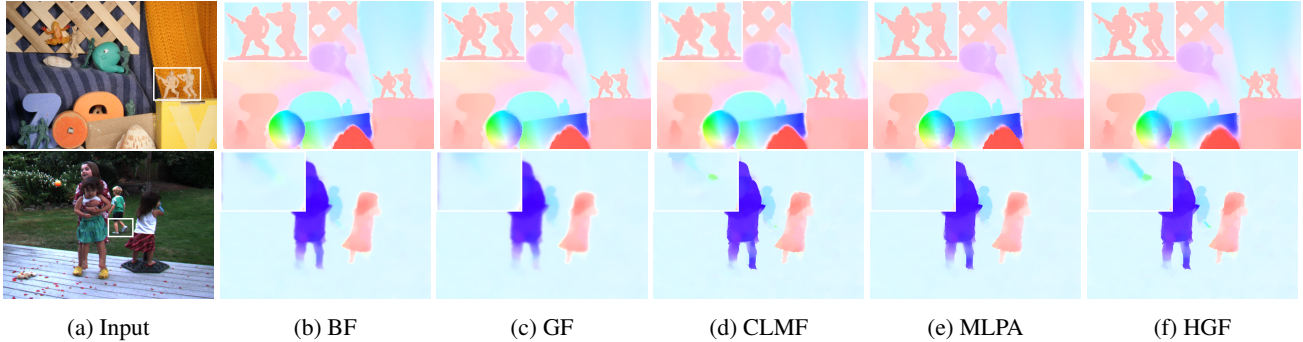

	\centering	
	\begin{subfigure}[b]{0.16\linewidth}
		\includegraphics[width=\textwidth]{optical_Army}
	\end{subfigure}	
	\begin{subfigure}[b]{0.16\linewidth}
		\includegraphics[width=\textwidth]{optical_Army_BF}
	\end{subfigure}	
	\begin{subfigure}[b]{0.16\linewidth}
		\includegraphics[width=\textwidth]{optical_Army_GF}
	\end{subfigure}	
	\begin{subfigure}[b]{0.16\linewidth}
		\includegraphics[width=\textwidth]{optical_Army_CLMF}
	\end{subfigure}	
	\begin{subfigure}[b]{0.16\linewidth}
		\includegraphics[width=\textwidth]{optical_Army_MLPA}
	\end{subfigure}	
	\begin{subfigure}[b]{0.16\linewidth}
		\includegraphics[width=\textwidth]{optical_Army_HGF}
	\end{subfigure}

	\begin{subfigure}[b]{0.16\linewidth}
		\includegraphics[width=\textwidth]{optical_Backyard}
		\caption{Input}
	\end{subfigure}	
	\begin{subfigure}[b]{0.16\linewidth}
		\includegraphics[width=\textwidth]{optical_Backyard_BF}
		\caption{BF}
	\end{subfigure}	
	\begin{subfigure}[b]{0.16\linewidth}
		\includegraphics[width=\textwidth]{optical_Backyard_GF}
		\caption{GF}
	\end{subfigure}	
	\begin{subfigure}[b]{0.16\linewidth}
		\includegraphics[width=\textwidth]{optical_Backyard_CLMF}
		\caption{CLMF}
	\end{subfigure}	
	\begin{subfigure}[b]{0.16\linewidth}
		\includegraphics[width=\textwidth]{optical_Backyard_MLPA}
		\caption{MLPA}
	\end{subfigure}	
	\begin{subfigure}[b]{0.16\linewidth}
		\includegraphics[width=\textwidth]{optical_Backyard_HGF}
		\caption{HGF}
	\end{subfigure}	
	\setcounter{figure}{6}
	\caption{\textbf{Optical Flow Estimation:} (a) is input color image.  (b) (c) (d) (e) and (f) demonstrate the stereo matching results of BF, GF, CLMF, MLPA and HGF, respectively, where the white box denotes the close-up region for visual comparison. }
	\label{fig:flow}
\end{figure*}

%
%
%
%
%

\section{Experiments and Applications}
\label{sec:exp}

To demonstrate the ability of our HGF, we apply it to three classic multi-label problems: stereo matching, optical flow and image segmentation. To show the robustness of our method, we use following, the same constant parameter setting to generate our results: $(\lambda = 0.05, r = 7)$, where $r$ indicates the radius of box window and $\lambda$ balances the two terms in Eq~\eqref{eq:HGF_linear_regression}. In the following experiments, each channel $\bm{G}_i$ of the polynomial guidance $\bm{G}$ is synthesized by the mapping $\bm{G}_{(i-1)d + j} = \bm{I}_i^{j}$ $(1 \leq j \leq 2)$ with a color input guidance $\bm{I}$ and total five edge-aware filters including BF~\cite{Tomasi_ICCV_1998}, GF~\cite{He_PAMI_2013}, CLMF~\cite{Lu_CVPR_2012}, MLPA~\cite{Tan_CVPR_2014} and our HGF are taken to perform comparison. For each application, we incorporate above five filters into the same framework to procedure final results. Experiments disclose our filter achieves the best result in terms of accuracy.

\vspace{0.1cm} \noindent \textbf{Nonlinearity \;}  We own the achievement of HGF to the synthesized polynomial guidance which introduces nonlinearity to our HGF. Fig~\ref{fig:curves} compares the polynomial model and linear model on an artificial dataset, which consists of a curve extracting from a nature image and strong noise added to every fifth datapoints. From the plotted fitting curves of linear model, we can observe that the linear model fails in the dataset. We thus believe that linear model  will fail too  while 
 smoothing highly corrupted cost slice in the following experiments


\vspace{0.1cm} \noindent \textbf{Stereo Matching \;}
We conduct experiments based on the cost volume filtering framework~\cite{Hosni_PAMI_2013}. This framework comprises cost volume computation, cost aggregation, disparity computation and post processing. Above five filters are employed for cost aggregation. According to the results reported in Fig~\ref{fig:stereo}, our method outperforms other aggregation filters and the run time is less than $800ms$.


\vspace{0.1cm}  \noindent \textbf{Optical Flow \;}
Xiao \etal~\cite{Xiao_ECCV_2006} separate the traditional one-step variational updating model into a two-step filtering-based updating model for optical flow estimation, where BF is proposed to substitute the original anisotropic diffusion process. In this framework, we substitute original BF with  GF, CLMF, MLPA and  HGF to test the performance of these filters. According to Fig~\ref{fig:flow}, our method can detect tiny structure in the optical flow images within $900ms$.


\vspace{0.2cm} \noindent \textbf{Image Segmentation \;}
We also take the cost volume filtering framework~\cite{Hosni_PAMI_2013} to show that HGF performs well for image segmentation, where the labels encode whether a pixel belongs to the foreground or background and cost computation is same to Hosni \etal~\cite{Hosni_PAMI_2013}. Fig~\ref{fig:segment} proves that our HGF is able to distinguish foreground from background much better and the run time is $120ms$.

\begin{figure}
	\setlength{\tabcolsep}{0cm}
	\begin{tabular}{cccc}
		\multirow{2}{*}{ \parbox[c]{0.3\linewidth}{ \vspace{-0.8cm} 
				\begin{subfigure}[b]{\linewidth}
				\includegraphics[width=\textwidth]{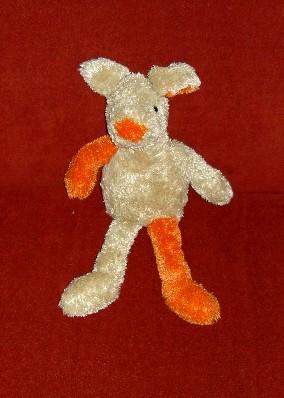}
				\caption{Input}
		\end{subfigure}	} } 
	    & 
		\parbox[c]{0.2\linewidth}{
	    \begin{subfigure}[b]{0.98\linewidth}
			\includegraphics[width=\textwidth]{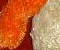}
			\caption{Close-up}
		\end{subfigure} }
	    & 
	    \parbox[c]{0.2\linewidth}{
	    \begin{subfigure}[b]{0.98\linewidth}
	   		\includegraphics[width=\textwidth]{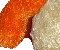}
	   		\caption{BF}
	    \end{subfigure} }
        &
        \parbox[c]{0.2\linewidth}{ 
        	\begin{subfigure}[b]{0.98\linewidth}
        		\includegraphics[width=\textwidth]{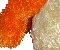}
        		\caption{GF}
        \end{subfigure} } \\
		& 
		\parbox[c]{0.2\linewidth}{ \vspace{0.15cm}
			\begin{subfigure}[b]{0.98\linewidth}
				\includegraphics[width=\textwidth]{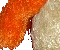}
				\caption{CLMF}
		\end{subfigure} }
	    & 
	    \parbox[c]{0.2\linewidth}{ \vspace{0.15cm}
	    	\begin{subfigure}[b]{0.98\linewidth}
	    		\includegraphics[width=\textwidth]{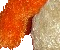}
	    		\caption{MLPA}
	    \end{subfigure} }
        & 
        \parbox[c]{0.2\linewidth}{ \vspace{0.15cm}
        	\begin{subfigure}[b]{0.98\linewidth}
        		\includegraphics[width=\textwidth]{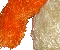}
        		\caption{HGF}
        \end{subfigure} }
	\end{tabular}
	\caption{\textbf{Segmentation results:} (a) is input image, where black box denotes bounding box and white box presents the close-up region. (b) is the close-up of (a). (c) (d) (e) (f) (g) are the segmentation results of BF, GF, CLMF, MLPA and HGF, respectively.  }
	\label{fig:segment}
\end{figure}

\section{Conclusion}

This paper presented an effective guided image filter for multi-label problem. We own the power to the nonlinearity introduced by the synthesized polynomial guidance. A side effect of the nonlinearity model is that it inevitably increases the running cost as we have to invert a bulk of large matrices. Fortunately, our new designed hardware-efficient matrix inverse algorithm can reduce the run time significantly by the help of our effective matrix inverse technique. We believe our filter will greatly profit building efficient computer vision systems in other multi-label problems.

\section{Acknowledgments}

This work was supported in part by the National Key Research and Development Program of China (Grant No. 2016YFB1001001) and the National Natural Science Foundation of China (Grant No. 61620106003, 61522203, 61571046).

\newpage 
{\small
\bibliographystyle{ieee}
\bibliography{egbib}
}

\end{document}